\def\blfootnote{\xdef\@thefnmark{}\@footnotetext}
\newcommand{\eye}{\boldsymbol{I}}
\newcommand{\task}{\mathcal{T}}
\newcommand{\loss}{\mathcal{L}}
\newcommand{\data}{\mathcal{D}}
\newcommand{\inp}{\mathbf{x}}
\newcommand{\out}{\mathbf{y}}
\newcommand{\bR}{\mathbb{R}}
\newcommand{\bE}{\mathbb{E}}
\newtheorem{theorem}{Theorem}
\newtheorem{lemma}{Lemma}
\newtheorem{definition}{Definition}
\newtheorem{assumption}{Assumption}
\newtheorem{corollary}{Corollary}
\DeclareMathOperator*{\argmin}{\mathrm{argmin}}
\newcommand{\param}{{\bm{\phi}}}               
\newcommand{\paramspace}{\Phi}
\newcommand{\prior}{{\bm{\theta}}}               
\newcommand{\priorspace}{\Theta}
\newcommand{\udparam}{\param}     
\newcommand{\dparam}{{\param'}}             
\newcommand{\mlprior}{\prior^*_{\mathrm{ML}}}
\newcommand{\fn}{\mathcal{L}}                  
\newcommand{\fnht}{\hat{\fn}}        
\newcommand{\alg}{\mathcal{A}lg}
\newcommand{\algstar}{\mathcal{A}lg^\star}
\newcommand{\eps}{\epsilon}
\newcommand{\grad}{\bm{d}}
\newcommand{\pgrad}{\nabla}
\newcommand{\datatr}{\data^{\mathrm{tr}}}
\newcommand{\datatest}{\data^{\mathrm{test}}}
\def\blfootnote{\xdef\@thefnmark{}\@footnotetext}
\title{Meta-Learning with Implicit Gradients}
\author{
	Aravind Rajeswaran$^{*,1}$ \hspace*{7pt} Chelsea Finn$^{*,2}$ \hspace*{7pt} Sham Kakade$^1$ \hspace*{7pt} Sergey Levine$^2$ \\[0.2cm]
	$^1$ University of Washington Seattle \hspace*{10pt} $^2$ University of California Berkeley
}
\begin{document}

\maketitle
\blfootnote{$^*$ Equal contributions. Project page: \url{http://sites.google.com/view/imaml}}

\begin{abstract}
A core capability of intelligent systems is the ability to quickly learn new tasks by drawing on prior experience. Gradient (or optimization) based meta-learning has recently emerged as an effective approach for few-shot learning. In this formulation, meta-parameters are learned in the outer loop, while task-specific models are learned in the inner-loop, by using only a small amount of data from the current task. A key challenge in scaling these approaches is the need to differentiate through the inner loop learning process, which can impose considerable computational and memory burdens. By drawing upon implicit differentiation, we develop the implicit MAML algorithm, which depends only on the solution to the inner level optimization and not the path taken by the inner loop optimizer. This effectively decouples the meta-gradient computation from the choice of inner loop optimizer. As a result, our approach is agnostic to the choice of inner loop optimizer and can gracefully handle many gradient steps without vanishing gradients or memory constraints.
Theoretically, we prove that implicit MAML can compute accurate meta-gradients with a memory footprint no more than that which is required to compute a single inner loop gradient and
at no overall increase in the total computational cost. Experimentally, we show that these benefits of implicit MAML translate into empirical gains on few-shot image recognition benchmarks.
\end{abstract}

\section{Introduction}
A core aspect of intelligence is the ability to quickly learn new tasks by drawing upon prior experience from related tasks. Recent work has studied how meta-learning algorithms~\cite{schmidhuber1987, thrun, naik} can acquire such a capability by learning to efficiently learn a range of tasks, thereby enabling learning of a new task with as little as a single example~\cite{mann,matchingnets,maml}. 
Meta-learning algorithms can be framed in terms of recurrent~\cite{hochreiter,mann,ravi2016optimization} or attention-based~\cite{matchingnets,mishra2017simple} models that are trained via a meta-learning objective, to essentially encapsulate the learned learning procedure in the parameters of a neural network.
An alternative formulation is to frame meta-learning as a bi-level optimization procedure~\cite{maclaurin2015gradient,maml}, where the ``inner'' optimization represents adaptation to a given task, and the ``outer'' objective is the meta-training objective. Such a formulation can be used to learn the initial parameters of a model such that optimizing from this initialization leads to fast adaptation and generalization. In this work, we focus on this class of optimization-based methods, and in particular the model-agnostic meta-learning (MAML) formulation~\cite{maml}. MAML has been shown to be as expressive as black-box approaches~\cite{universality}, is applicable to a broad range of settings~\cite{finn2017one,langauge_maml,AlShedivat2017ContinuousAV,ftml}, and recovers a convergent and consistent optimization procedure~\cite{finn2018learning}.

\begin{figure}[t!]
\centering
\includegraphics[width=0.99\linewidth]{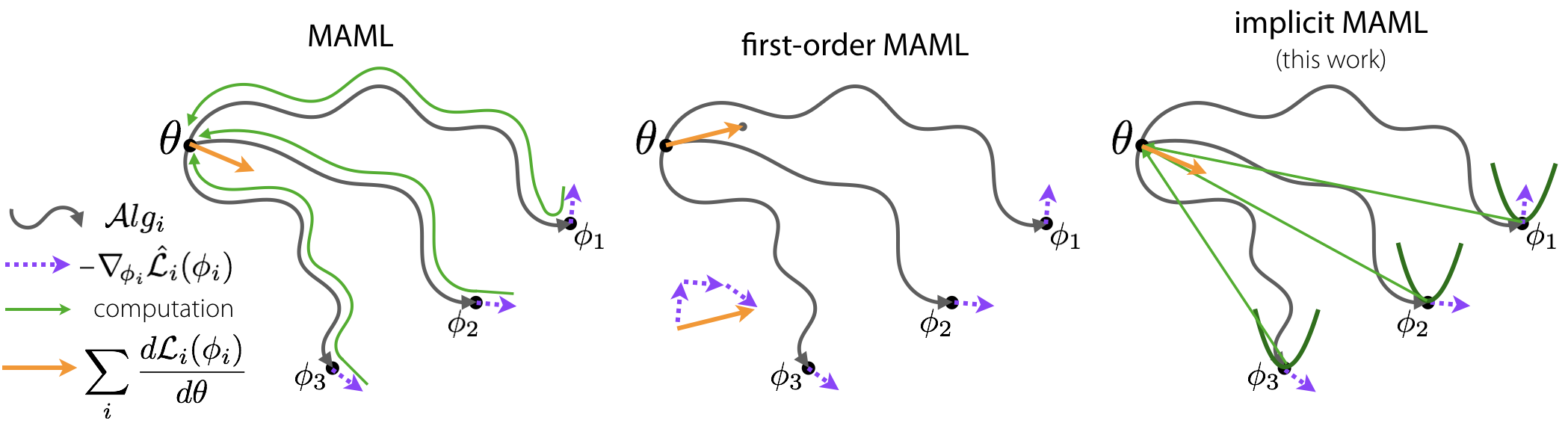}
\caption{To compute the meta-gradient $\sum_i\frac{d\mathcal{L}_i(\phi_i)}{d\theta}$, 
the MAML algorithm differentiates through the optimization path, as shown in green, while first-order MAML computes the meta-gradient by approximating $\frac{d\param_i}{d\prior}$ as $\eye$. Our implicit MAML approach derives an analytic expression for the exact meta-gradient without differentiating through the optimization path by estimating local curvature.}
\label{fig:diagram}
\vspace*{-0.5cm}
\end{figure}

Despite its appealing properties, meta-learning an initialization requires backpropagation through the inner optimization process. As a result, the meta-learning process requires higher-order derivatives, imposes a non-trivial computational and memory burden, and can suffer from vanishing gradients. These limitations make it harder to scale optimization-based meta learning methods to tasks involving medium or large datasets, or those that require many inner-loop optimization steps. Our goal is to develop an algorithm that addresses these limitations. 


The main contribution of our work is the development of the implicit MAML~(iMAML) algorithm, an approach for optimization-based meta-learning with deep neural networks that removes the need for differentiating through the optimization path. Our algorithm aims to learn a set of parameters such that an optimization algorithm that is initialized at and regularized to this parameter vector leads to good generalization for a variety of learning tasks. By leveraging the implicit differentiation approach, we derive an analytical expression for the meta (or outer level) gradient that depends only on the solution to the inner optimization and not the path taken by the inner optimization algorithm, as depicted in Figure~1. This decoupling of meta-gradient computation and choice of inner level optimizer has a number of appealing properties.

First, the inner optimization path need not be stored nor differentiated through, thereby making implicit MAML memory efficient and scalable to a large number of inner optimization steps. Second, implicit MAML is agnostic to the inner optimization method used, as long as it can find an approximate solution to the inner-level optimization problem. This permits the use of higher-order methods, and in principle even non-differentiable optimization methods or components like sample-based optimization, line-search, or those provided by proprietary software (e.g.~Gurobi). Finally, we also provide the first (to our knowledge) non-asymptotic theoretical analysis of bi-level optimization. We show that an $\epsilon$--approximate meta-gradient can be computed via implicit MAML using $\tilde{O}(\log(1/\epsilon))$ gradient evaluations and $\tilde{O}(1)$ memory, meaning the memory required does not grow with number of gradient steps.

\vspace{-0.25cm}
\section{Problem Formulation and Notations}
\vspace{-0.25cm}

We first present the meta-learning problem in the context of few-shot supervised learning, and then generalize the notation to aid the rest of the exposition in the paper.

\vspace{-0.2cm}
\subsection{Review of Few-Shot Supervised Learning and MAML} 
In this setting, we have a collection of meta-training tasks $\{ \task_i \}_{i=1}^M$ drawn from $P(\task)$. Each task~$\task_i$ is associated with a dataset $\data_i$, from which we can sample two disjoint sets: $\datatr_i$ and $\datatest_i$. These datasets each consist of $K$ input-output pairs. Let $\inp \in \mathcal{X}$ and $\out \in \mathcal{Y}$ denote inputs and outputs, respectively. The datasets take the form $\datatr_i = \{ (\inp_i^k, \out_i^k) \}_{k=1}^{K}$, and similarly for $\datatest_i$. 
We are interested in learning models of the form $h_\param(\inp): \mathcal{X} \rightarrow \mathcal{Y}$, parameterized by $\param \in \paramspace \equiv \bR^d$.
Performance on a task is specified by a loss function, such as the cross entropy or squared error loss. We will write the loss function in the form $\fn(\param, \data)$, as a function of a parameter vector and dataset. The goal for task $\task_i$ is to learn task-specific parameters $\param_i$ using $\datatr_i$ such that we can minimize the population or test loss of the task, $\fn(\param_i, \datatest_i)$.

In the general bi-level meta-learning setup, we consider a space of algorithms that compute task-specific parameters using a set of meta-parameters $\prior \in \priorspace \equiv \bR^d$ and the training dataset from the task, such that $\param_i = \alg(\prior, \datatr_i)$ for task $\task_i$. The goal of meta-learning is to learn meta-parameters that produce good task specific parameters after adaptation, as specified below:
\begin{equation}
    \label{eq:objective}
    \overbrace{ \mlprior := \argmin_{\prior \in \priorspace} F(\prior)}^{\mathrm{outer-level}} \, , 
    \textrm{ where } F(\prior) 
    = \frac{1}{M} \sum_{i=1}^M \ \fn \bigg( \overbrace{ \alg \big( \prior, \datatr_i \big) }^{\mathrm{inner-level}}, \ \datatest_i \bigg).
  \end{equation}
We view this as a bi-level optimization problem since we typically interpret $\alg \big( \prior, \datatr_i \big)$ as either explicitly or implicitly solving an underlying optimization problem.
At meta-test (deployment) time, when presented with a dataset $\datatr_j$ corresponding to a new task $\task_j \sim P(\task)$, we can achieve good generalization performance (i.e., low test error) by using the adaptation procedure with the meta-learned parameters as $\param_j = \alg(\mlprior, \datatr_j)$.

In the case of MAML~\cite{maml}, $\alg(\prior, \data)$ corresponds to
one or multiple steps of gradient descent initialized at $\prior$. For
example, if one step of gradient descent is used, we have:
\begin{equation}
    \label{eq:maml_gd}
  \param_i \equiv \alg(\prior, \datatr_i) = \prior - \alpha \nabla_\prior \loss(\prior,
\datatr_i).  \hspace*{15pt} \text{(inner-level of MAML)}
\end{equation}
Typically, $\alpha$ is a scalar hyperparameter, but can also be a learned vector~\cite{metasgd}. Hence, for MAML, the meta-learned parameter ($\mlprior$) has a learned inductive bias that is particularly well-suited for fine-tuning on tasks from $P(\task)$ using $K$ samples.
To solve the outer-level problem with gradient-based methods, we require a way to differentiate through $\alg$. In the case of MAML, this corresponds to backpropagating through the dynamics of gradient descent.

\vspace*{-0.1cm}
\subsection{Proximal Regularization in the Inner Level}
\vspace*{-0.1cm}
\label{sec:proximal_regularization}
To have sufficient learning in the inner level while also avoiding over-fitting, $\alg$ needs to incorporate some form of regularization. Since MAML uses a small number of gradient steps, this corresponds to early stopping and can be interpreted as a form of regularization and Bayesian prior~\cite{erin}. In cases like ill-conditioned optimization landscapes and medium-shot learning, we may want to take many gradient steps, which poses two challenges for MAML. First, we need to store and differentiate through the long optimization path of $\alg$, which imposes a considerable computation and memory burden. Second, the dependence of the model-parameters $\{ \param_i \}$ on the meta-parameters $(\prior)$ shrinks and vanishes as the number of gradient steps in $\alg$ grows, making meta-learning difficult.
To overcome these limitations, we consider a more explicitly regularized algorithm: 
\begin{equation}
    \label{eq:update_rule_supervised}
    \algstar(\prior, \datatr_i) = \underset{\dparam \in \paramspace}{\argmin}~\loss(\dparam, \datatr_i) + \frac{\lambda}{2}~||\dparam - \prior||^2.
  \end{equation}
\vspace*{-0.1cm}

The proximal regularization term in Eq.~\ref{eq:update_rule_supervised} encourages $\param_i$ to remain close to $\prior$, thereby retaining a strong dependence throughout. The regularization strength $(\lambda)$ plays a role similar to the learning rate $(\alpha)$ in MAML, controlling the strength of the prior~$(\prior)$ relative to the data~$(\datatr_\task)$. Like $\alpha$, the regularization strength $\lambda$ may also be learned. Furthermore, both $\alpha$ and $\lambda$ can be scalars, vectors, or full matrices. For simplicity, we treat $\lambda$ as a scalar hyperparameter.
In Eq.~\ref{eq:update_rule_supervised}, we use $\star$ to denote that the optimization problem is solved exactly. In practice, we  use iterative algorithms (denoted by $\alg$) for finite iterations, which return approximate minimizers. We explicitly consider the discrepancy between approximate and exact solutions in our analysis.

\vspace*{-0.1cm}
\subsection{The Bi-Level Optimization Problem} 
\vspace*{-0.1cm}

For notation convenience, we will sometimes express the dependence on
task $\task_i$ using a subscript instead of arguments, e.g.
we write:
\begin{eqnarray*}
  \fn_i(\param) := \fn \big(\param, \ \datatest_i \big), & 
  \fnht_i(\param) :=  \fn \big(\param, \datatr_i \big), &
\alg_i \big( \prior \big) :=  \alg \big( \prior, \datatr_i \big).
\end{eqnarray*}


With this notation, the bi-level meta-learning
problem can be written more generally as:   
\begin{equation}
\begin{aligned}
\label{eq:update_rule}
& \mlprior := \argmin_{\prior \in \priorspace} F(\prior) \, , 
\textrm{ where } F(\prior) 
= \frac{1}{M} \sum_{i=1}^M \ \fn_i \big( \algstar_i (\prior) \big), \text{ and} \\
& \algstar_i(\prior) := \underset{\dparam \in \paramspace}{\argmin} \ G_i(\dparam, \prior), \text{ where } G_i(\dparam, \prior) = \fnht_i(\dparam) + \frac{\lambda}{2}~||\dparam - \prior||^2.
\end{aligned}
\end{equation}

\subsection{Total and Partial Derivatives}

We use $\grad$ to denote the total derivative and $\pgrad$ to denote partial derivative. For nested function of the form $\fn_i(\param_i)$ where $\param_i=\alg_i(\prior)$, we
have from chain rule  
\begin{equation*}
\grad_\prior \fn_i(\alg_i(\prior))
= {\frac{d\alg_i(\prior)}{d\prior} \pgrad_\param \fn_i(\param)\mid_{\param=\alg_i(\prior)}}
= {\frac{d\alg_i(\prior)}{d\prior} \pgrad_\param \fn_i(\alg_i(\prior))}
\end{equation*}
Note the important distinction between $\grad_\prior
\fn_i(\alg_i(\prior))$ and $\pgrad_\param \fn_i(\alg_i(\prior))$. The former
passes derivatives through $\alg_i(\prior)$ while the latter does
not. $\pgrad_\param \fn_i(\alg_i(\prior))$ is simply the gradient
function, i.e. $\pgrad_\param \fn_i(\param)$, evaluated at
$\param=\alg_i(\prior)$. Also note that $\grad_\prior
\fn_i(\alg_i(\prior))$ and $\pgrad_\param \fn_i(\alg_i(\prior))$ are $d$--dimensional vectors, while $\frac{d\alg_i(\prior)}{d\prior}$ is a $(d \times d)$--size Jacobian matrix. Throughout this text, we will also use $\grad_\prior$ and
$\frac{d}{d\prior}$ interchangeably. 
\section{The Implicit MAML Algorithm}

Our aim is to solve the bi-level meta-learning problem in Eq.~\ref{eq:update_rule} using an iterative gradient based algorithm of the form $\prior \leftarrow \prior - \eta \ \grad_\prior F(\prior)$. Although we derive our method based on standard gradient descent for simplicity, any other optimization method, such as quasi-Newton or Newton methods, Adam~\cite{adam}, or gradient descent with momentum can also be used without modification.
The gradient descent update be expanded using the chain rule as
\begin{equation}
    \label{eq:outer_update_rule}
    \prior \leftarrow \prior - \eta \ \frac{1}{M} \sum_{i=1}^M \frac{d \algstar_i (\prior)}{d \prior} \ \pgrad_\phi \fn_i(\algstar_i(\prior)).
\end{equation}
Here, $\pgrad_\param \fn_i(\algstar_i(\prior))$ is simply
$\pgrad_\param \fn_i(\param)\mid_{\param=\algstar_i(\prior)}$ which
can be easily obtained in practice via automatic differentiation. For
this update rule, we must compute
$\frac{d\algstar_i(\prior)}{d\prior}$, where $\algstar_i$ is
implicitly defined as an optimization
problem~(Eq.~\ref{eq:update_rule}), which presents the primary
challenge. We now present an efficient algorithm (in compute and memory) to compute the meta-gradient..

\subsection{Meta-Gradient Computation}
\label{sec:implicit_jacobian}
If $\algstar_i(\prior)$ is implemented as an iterative algorithm, such as gradient descent, then one way to compute $\frac{d\algstar_i(\prior)}{d\prior}$ is to propagate derivatives through the iterative process, either in forward mode or reverse mode. However, this has the drawback of depending explicitly on the path of the optimization, which has to be fully stored in memory, quickly becoming intractable when the number of gradient steps needed is large. Furthermore, for second order optimization methods, such as Newton's method, third derivatives are needed which are difficult to obtain. Furthermore, this approach becomes impossible when non-differentiable operations, such as line-searches, are used.
However, by recognizing that $\algstar_i$ is implicitly defined as the
solution to an optimization problem, we may employ a different
strategy that does not need to consider the path of the optimization
but only the final result. This is derived in the following Lemma.

\begin{lemma}
\label{lemma:alg_derivative}
(Implicit Jacobian) 
Consider $\algstar_i(\prior)$ as defined in Eq.~\ref{eq:update_rule} for task $\task_i$. Let $\param_i = \algstar_i(\prior)$ be the result of $\algstar_i(\prior)$. If $\left( \eye + \frac{1}{\lambda} \pgrad_\param^2 \fnht_i(\param_i) \right)$ is invertible, then the derivative Jacobian is
\begin{equation}
    \label{eq:alg_derivative}
    \frac{d \algstar_i(\prior)}{d \prior} = \left( \eye + \frac{1}{\lambda}~ \pgrad_\param^2 \fnht_i (\param_i) \right)^{-1}.
\end{equation}
\end{lemma}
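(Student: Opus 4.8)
The plan is to avoid differentiating through the optimization path entirely and instead exploit the first-order stationarity condition that characterizes the inner minimizer. Since $\param_i = \algstar_i(\prior)$ is the exact solution of the regularized inner problem $\argmin_{\dparam} G_i(\dparam, \prior)$ with $G_i(\dparam,\prior) = \fnht_i(\dparam) + \frac{\lambda}{2}\|\dparam - \prior\|^2$, it must satisfy $\pgrad_\param G_i(\param_i, \prior) = \zeros$. Computing the partial gradient in the first argument, this reads $\pgrad_\param \fnht_i(\param_i) + \lambda(\param_i - \prior) = \zeros$. The key observation is that this holds for \emph{every} $\prior$, because $\param_i$ is defined as the minimizer for that particular $\prior$; it is therefore an identity in $\prior$ that we are free to differentiate.

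First I would differentiate this stationarity identity totally with respect to $\prior$, treating $\param_i$ as the $\prior$-dependent function $\algstar_i(\prior)$ and applying the chain rule. Writing $J = \frac{d\algstar_i(\prior)}{d\prior}$, the term $\pgrad_\param \fnht_i(\param_i)$ contributes the Hessian times the Jacobian, $\pgrad_\param^2 \fnht_i(\param_i)\, J$, the term $\lambda \param_i$ contributes $\lambda J$, and the term $-\lambda \prior$ contributes $-\lambda \eye$. Collecting terms yields the linear matrix equation $\left(\pgrad_\param^2 \fnht_i(\param_i) + \lambda \eye\right) J = \lambda \eye$.

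Then I would solve this system for $J$. Dividing through by $\lambda$ puts the left-hand matrix in the form $\eye + \frac{1}{\lambda}\pgrad_\param^2 \fnht_i(\param_i)$, which is exactly the matrix assumed invertible in the statement. Inverting it gives $J = \left(\eye + \frac{1}{\lambda}\pgrad_\param^2 \fnht_i(\param_i)\right)^{-1}$, as claimed. Because this matrix is symmetric (both the Hessian and $\eye$ are symmetric), the transpose/ordering convention implicit in the paper's chain-rule notation does not affect the final expression.

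The step I expect to require the most care is the justification that $\algstar_i(\prior)$ is in fact differentiable in $\prior$, so that the total derivative $J$ exists and the manipulations above are legitimate. This is where the invertibility hypothesis does real work: it is precisely the nondegeneracy condition under which the implicit function theorem, applied to the stationarity map $(\param, \prior) \mapsto \pgrad_\param \fnht_i(\param) + \lambda(\param - \prior)$, guarantees a locally unique and continuously differentiable solution branch $\param_i(\prior)$ and licenses solving for its Jacobian. I would therefore invoke the implicit function theorem at the outset to establish differentiability, and only afterward carry out the explicit differentiation and linear solve described above.
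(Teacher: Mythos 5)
Your proposal is correct and follows essentially the same route as the paper: differentiate the first-order stationarity condition $\pgrad_\param \fnht_i(\param_i) + \lambda(\param_i - \prior) = \zeros$ (which the paper writes in the equivalent proximal-point form $\param_i = \prior - \frac{1}{\lambda}\pgrad_\param \fnht_i(\param_i)$) with respect to $\prior$ and solve the resulting linear system for the Jacobian. Your explicit appeal to the implicit function theorem is in fact slightly more careful than the paper, which simply assumes the derivative exists (``when the derivative exists, we can differentiate\dots''), but it does not change the substance of the argument.
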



Note that the derivative (Jacobian) depends only on the final result of the algorithm, and not the path taken by the algorithm. Thus, in principle any approach of algorithm can be used to compute $\algstar_i(\prior)$, thereby decoupling meta-gradient computation from choice of inner level optimizer.

\begin{algorithm}[t!]
   \caption{Implicit Model-Agnostic Meta-Learning (iMAML)}
   \label{alg:iMAML}
\begin{algorithmic}[1]
\STATE {\bf Require:} Distribution over tasks $P(\task)$, outer step size $\eta$, regularization strength $\lambda$, 
\WHILE{not converged}
    \STATE Sample mini-batch of tasks $\{ \task_i \}_{i=1}^B \sim P(\task)$
    \FOR{Each task $\task_i$}
        \STATE Compute task meta-gradient $\bm{g}_i=$~\texttt{Implicit-Meta-Gradient}$(\task_i, \prior, \lambda)$
    \ENDFOR
    \STATE Average above gradients to get $\hat{\pgrad} F(\prior) = (1/B) \sum_{i=1}^B \bm{g}_i$
    \STATE Update meta-parameters with gradient descent: $\prior \leftarrow \prior - \eta \hat{\pgrad} F(\prior)$ \ \ //~\textit{(or Adam)}
\ENDWHILE
\end{algorithmic}
\end{algorithm}

\begin{algorithm}[t!]
   \caption{Implicit Meta-Gradient Computation}
   \label{alg:implicit_grad}
\begin{algorithmic}[1]
\STATE {\bf Input:} Task $\task_i$, meta-parameters $\prior$, regularization strength $\lambda$
\STATE {\bf Hyperparameters:} Optimization accuracy thresholds $\delta$ and $\delta'$
\STATE Obtain task parameters $\param_i$ using iterative optimization solver such that:
$
\|\param_i- \algstar_i(\prior)\| \leq \delta
$
\STATE Compute partial outer-level gradient $\bm{v}_i=\pgrad_\param \fn_\task(\param_i)$
\STATE Use an iterative solver (e.g. CG) along with  reverse mode differentiation (to compute Hessian vector products) to compute $\bm{g}_i$ such that: 
$
\|\bm{g}_i - \big( \eye + \frac{1}{\lambda} \pgrad^2 \fnht_i(\param_i) \big)^{-1} \bm{v}_i\|\leq \delta'
$
\STATE {\bf Return:} $\bm{g}_i$
\end{algorithmic}
\end{algorithm}

\textbf{Practical Algorithm:} While Lemma~\ref{lemma:alg_derivative} provides an idealized way to compute the $\algstar_i$ Jacobians and thus by extension the meta-gradient, it may be difficult to directly use it in practice. Two issues are particularly relevant. First, the meta-gradients require computation of $\algstar_i(\prior)$, which is the exact solution to the inner optimization problem. In practice, we may be able to obtain only approximate solutions. Second, explicitly forming and inverting the matrix in Eq.~\ref{eq:alg_derivative} for computing the Jacobian may be intractable for large deep neural networks. To address these difficulties, we consider approximations to the idealized approach that enable a practical algorithm.

First, we consider an approximate solution to the inner optimization
problem, that can be obtained with iterative optimization algorithms
like gradient descent.  
\begin{definition}
\label{def:delta_approx_alg}
($\delta$--approx. algorithm)
Let $\alg_i(\prior)$ be a $\delta$--accurate approximation of $\algstar_i(\prior)$, i.e.
\[
\| \alg_i(\prior) - \algstar_i(\prior) \| \leq \delta
\]
\end{definition}

Second, we will perform a partial or approximate matrix inversion given by:
\begin{definition}
\label{def:delta_approx_inversion}
($\delta'$--approximate Jacobian-vector product) Let $\bm{g}_i$ be a vector such that
\[
\| \bm{g}_i - \bigg(\eye + \frac{1}{\lambda}~\pgrad_\param^2 \fnht_i(\param_i) \bigg)^{-1} \pgrad_\param \fn_i(\param_i) \| \leq \delta'
\]
where $\param_i = \alg_i(\prior)$ and $\alg_i$ is based on definition~\ref{def:delta_approx_alg}.
\end{definition}

Note that $\bm{g}_i$ in definition~\ref{def:delta_approx_inversion} is
an approximation of the meta-gradient for task $\task_i$. Observe that $\bm{g}_i$ can be obtained as an approximate solution to the optimization problem:
\begin{equation}\label{eq:sub_problem}
\min_{\bm{w}} \ \ \bm{w}^\top \bigg(\eye + \frac{1}{\lambda}~\pgrad_\param^2 \fnht_i(\param_i) \bigg) \bm{w} - \bm{w}^\top \pgrad_\param \fn_i(\param_i)
\end{equation}
The conjugate gradient (CG) algorithm is particularly well suited for this problem due to its excellent iteration complexity and requirement of only Hessian-vector products of
the form $\pgrad^2 \fnht_i(\param_i) \bm{v} $. Such hessian-vector products can be obtained cheaply without explicitly forming or storing the Hessian matrix (as we discuss in Appendix~\ref{sec:hvp}).
This CG
based inversion has been successfully deployed in Hessian-free or
Newton-CG methods for deep
learning~\cite{Martens2010DeepLV,NocedalBook} and trust region methods
in reinforcement
learning~\cite{trpo,Rajeswaran17nips}. Algorithm~\ref{alg:iMAML}
presents the full practical algorithm. Note that these approximations
to develop a practical algorithm introduce errors in the meta-gradient
computation. We analyze the impact of these errors in
Section~\ref{sec:analysis} and show that they are controllable. See Appendix~\ref{app:other_algs} for how iMAML generalizes prior gradient optimization based meta-learning algorithms.

\subsection{Theory}
\label{sec:analysis}
In Section~\ref{sec:implicit_jacobian}, we outlined a practical algorithm that makes approximations to the idealized update rule of Eq.~\ref{eq:outer_update_rule}. Here, we attempt to analyze the impact of these approximations, and also understand the computation and memory requirements of iMAML. We find that iMAML can match the minimax computational complexity of backpropagating through the path of the inner optimizer, but is substantially better in terms of memory usage. This work to our knowledge also provides the first non-asymptotic result that analyzes approximation error due to implicit gradients. Theorem~\ref{thm:approx_error} provides the computational and memory complexity for obtaining an $\epsilon$--approximate meta-gradient. We assume $\fn_i$ is smooth but do not require it to be convex. We assume that $G_i$ in Eq.~\ref{eq:update_rule} is strongly convex, which can be
made possible by appropriate choice of $\lambda$. The key to our analysis is a second order Lipshitz assumption, i.e. $\fnht_i (\cdot)$
is $\rho$-Lipshitz Hessian. This assumption and setting has received
considerable attention in recent optimization and deep learning
literature~\cite{Jin2017HowTE, Nesterov2006CubicRO}. 

Table~\ref{table:compare} summarizes our complexity results and compares with MAML and
truncated backpropagation~\cite{Shaban2018TruncatedBF} through the path of the inner optimizer. We
use $\kappa$ to denote the condition number of the inner problem
induced by $G_i$ (see Equation~\ref{eq:update_rule}), which can be
viewed as a measure of hardness of the inner optimization
problem. $\textrm{Mem}({\pgrad \fnht_i})$ is the memory taken to compute
a single derivative $\pgrad \fnht_i$. Under the assumption that
Hessian vector products are computed with the reverse mode of
autodifferentiation, we will have that both:
the compute time and memory used for computing a Hessian vector product
are with a (universal) constant factor of the compute time
and memory used for computing $\pgrad \fnht_i$ itself
(see Appendix~\ref{sec:hvp}). This allows us to measure
the compute time in terms of the number of $\pgrad \fnht_i$
computations. We refer readers to Appendix~\ref{sec:compute_memory_discussion} for additional discussion about the algorithms and their trade-offs.

\begin{table}[t!]
\caption{\label{table:compare}\footnotesize Compute and memory for computing the
  meta-gradient when using a $\delta$--accurate $\alg_i$, and the
  corresponding approximation error. Our compute time is measured in
  terms of the number of $\pgrad \fnht_i$ computations. All results
  are in $\tilde O(\cdot)$ notation, which hide additional log factors; the error bound hides additional problem dependent Lipshitz and smoothness  parameters (see the respective Theorem statements). 
$\kappa \geq 1$ is the condition number for inner objective $G_i$ (see Equation~\ref{eq:update_rule}), and $D$ is the diameter of the search
space.
The notions
of error are subtly different: we assume all methods solve the inner
optimization to error level of $\delta$ (as per definition~\ref{def:delta_approx_alg}).  For our algorithm, the error refers to the $\ell_2$ error in the computation of $\grad_\prior \fn_i(\algstar_i(\prior)) $. For the other algorithms, the 
error refers to the $\ell_2$ error in the computation of $\grad_\prior
\fn_i(\alg_i(\prior)) $.
We use Prop 3.1 of Shaban et al.~\cite{Shaban2018TruncatedBF} to
provide the guarantee we use. See Appendix~\ref{sec:compute_memory_discussion} for additional discussion.
}
\begin{center}
\footnotesize
\renewcommand{\arraystretch}{1.5}
\begin{tabular}{|c|c|c|c|}
  \hline
Algorithm & Compute & Memory & Error \\ \hline
MAML (GD + full back-prop)  & $\kappa \log \left( \frac{D}{\delta} \right)$    & $\textrm{Mem}({\pgrad \fnht_i})\cdot \kappa \  \log \left( \frac{D}{\delta} \right) $      & $0$  \\ \hline
MAML (Nesterov's AGD + full back-prop)  & $ \sqrt{\kappa} \log \left( \frac{D}{\delta} \right) $    & $\textrm{Mem}({\pgrad \fnht_i})\cdot \sqrt{\kappa} \  \log \left( \frac{D}{\delta} \right) $      & $0$  \\ \hline
Truncated back-prop~\cite{Shaban2018TruncatedBF} (GD)  & $ \kappa \log \left( \frac{D}{\delta} \right) $    & $\textrm{Mem}({\pgrad \fnht_i})\cdot \kappa \  \log \left( \frac{1}{\color{black}{\epsilon}} \right) $      & $\color{black}{\epsilon}$ \\ \hline
Implicit MAML (this work)  &  ${\color{blue} \sqrt{\kappa} \log \left( \frac{D}{\delta} \right)} $       & $\color{blue}{\textrm{Mem}({\pgrad \fnht_i})}$ 
& $\color{blue}{\delta} $ \\ \hline
\end{tabular}
\end{center}

\end{table}

Our main theorem is as follows:

\begin{theorem}
\label{thm:approx_error}
(Informal Statement; Approximation error in Algorithm~\ref{alg:implicit_grad})
Suppose that: $\fn_i(\cdot)$ is $B$ Lipshitz and $L$ smooth function; that $G_i(\cdot,\prior)$ (in
Eq.~\ref{eq:update_rule}) is a $\mu$-strongly convex function with
condition number $\kappa$; that $D$ is the diameter of search space for
$\param$ in the inner optimization problem (i.e. $\|\algstar_i(\prior)
\|\leq D$);  and $\fnht_i (\cdot)$ is $\rho$-Lipshitz Hessian.

Let $\bm{g}_i$ be the task meta-gradient returned by
Algorithm~\ref{alg:implicit_grad}. For any task $i$ and desired
accuracy level $\epsilon$, Algorithm~\ref{alg:implicit_grad} computes
an approximate task-specific meta-gradient with the following guarantee:
\[
  ||\bm{g}_i - \grad_\prior \fn_i(\algstar_i(\prior))|| \leq \epsilon\, .
\]
Furthermore, under the assumption that the Hessian
vector products are computed by the reverse mode of
autodifferentiation (Assumption~\ref{assumption:HVP}),  Algorithm~\ref{alg:implicit_grad} can be implemented using at most $\tilde{O}\left( \sqrt{\kappa} \log \left(
\frac{\textrm{poly}(\kappa,D,B,L,\rho,\mu,\lambda)}{\epsilon}
\right) \right)$ gradient computations of $\fnht_i (\cdot) $ and using
at most $2\cdot \mathrm{Mem}(\pgrad \fnht_i)$ memory.
\end{theorem}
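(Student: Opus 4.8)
The plan is to view the vector $\bm{g}_i$ returned by Algorithm~\ref{alg:implicit_grad} as a two-stage approximation of the exact meta-gradient, which by Lemma~\ref{lemma:alg_derivative} and the chain rule equals
\[
\grad_\prior \fn_i(\algstar_i(\prior)) = \Big( \eye + \tfrac{1}{\lambda} \pgrad_\param^2 \fnht_i(\param_i^\star) \Big)^{-1} \pgrad_\param \fn_i(\param_i^\star),
\]
evaluated at the exact inner solution $\param_i^\star := \algstar_i(\prior)$. Writing $M(\param) := \eye + \tfrac{1}{\lambda}\pgrad_\param^2 \fnht_i(\param)$, $b(\param) := \pgrad_\param\fn_i(\param)$, and $\param_i := \alg_i(\prior)$ for the $\delta$-accurate solution, I would split the error by the triangle inequality into (i) the approximate-inversion error $\|\bm{g}_i - M(\param_i)^{-1} b(\param_i)\| \le \delta'$, which is exactly Definition~\ref{def:delta_approx_inversion}, and (ii) the propagation error $\|M(\param_i)^{-1} b(\param_i) - M(\param_i^\star)^{-1} b(\param_i^\star)\|$ incurred by evaluating the implicit Jacobian at the approximate point $\param_i$ rather than $\param_i^\star$.

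The heart of the argument is bounding the propagation error in terms of $\delta = \|\param_i - \param_i^\star\|$. First I would record the spectral bound $M(\param) = \tfrac{1}{\lambda}\pgrad_\param^2 G_i(\param,\prior) \succeq \tfrac{\mu}{\lambda}\eye$, which follows from $\mu$-strong convexity of $G_i$ and gives $\|M(\param)^{-1}\| \le \lambda/\mu$ uniformly. Then, using the resolvent identity $A^{-1}-B^{-1} = A^{-1}(B-A)B^{-1}$, I would decompose
\[
\begin{aligned}
M(\param_i)^{-1} b(\param_i) - M(\param_i^\star)^{-1} b(\param_i^\star)
&= M(\param_i)^{-1}\big(b(\param_i) - b(\param_i^\star)\big) \\
&\quad + M(\param_i)^{-1}\big(M(\param_i^\star)-M(\param_i)\big)M(\param_i^\star)^{-1} b(\param_i^\star),
\end{aligned}
\]
and bound each term: $L$-smoothness of $\fn_i$ gives $\|b(\param_i)-b(\param_i^\star)\|\le L\delta$; the $\rho$-Lipschitz-Hessian assumption on $\fnht_i$ gives $\|M(\param_i^\star)-M(\param_i)\| \le \tfrac{\rho}{\lambda}\delta$; and $B$-Lipschitzness of $\fn_i$ gives $\|b(\param_i^\star)\|\le B$. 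Combining yields a propagation error of at most $\delta\cdot\tfrac{\lambda}{\mu}\big(L + \tfrac{\rho B}{\mu}\big)$, so the total error is at most $\delta' + \delta\cdot\tfrac{\lambda}{\mu}(L+\rho B/\mu)$; choosing $\delta' \le \epsilon/2$ and $\delta \le \epsilon\mu^2 / \big(2\lambda(L\mu + \rho B)\big)$ makes this at most $\epsilon$. Note this required $\delta$ is $\epsilon$ divided by a polynomial in $(\kappa, D, B, L, \rho, \mu, \lambda)$.

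For the complexity, I would instantiate the two solvers with their standard strongly-convex rates. Since $G_i$ has condition number $\kappa$, Nesterov's accelerated gradient descent started within the diameter-$D$ search space reaches $\|\param_i-\param_i^\star\|\le\delta$ in $O(\sqrt{\kappa}\log(D/\delta))$ steps, each costing one gradient of $\fnht_i$. Likewise $M(\param_i)$ has condition number $\kappa$, so conjugate gradient on the system in Eq.~\ref{eq:sub_problem} attains $\delta'$ accuracy in $O(\sqrt{\kappa}\log(1/\delta'))$ iterations, each a single Hessian-vector product $\pgrad_\param^2\fnht_i(\param_i)\bm{v}$; under Assumption~\ref{assumption:HVP} this costs a constant multiple of one $\pgrad\fnht_i$ computation. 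Substituting the above choices of $\delta,\delta'$ turns both $\log(D/\delta)$ and $\log(1/\delta')$ into $O(\log(\textrm{poly}(\kappa,D,B,L,\rho,\mu,\lambda)/\epsilon))$, giving the stated $\tilde O(\sqrt{\kappa}\log(\textrm{poly}/\epsilon))$ gradient count. For memory, the key point is that neither solver stores the optimization trajectory: both keep only a constant number of $d$-vectors, and the reverse-mode Hessian-vector product uses at most $2\cdot\mathrm{Mem}(\pgrad\fnht_i)$ memory, which dominates and yields the claimed bound.

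I expect the main obstacle to be the propagation bound of the second paragraph: the delicate part is controlling the perturbation of the inverse $M(\param)^{-1}$ under a $\delta$-shift in $\param$, which is precisely where the second-order ($\rho$-Lipschitz Hessian) assumption enters and where the operator-norm bookkeeping with the $A^{-1}(B-A)B^{-1}$ identity must be done carefully. A secondary subtlety is that CG's convergence is naturally stated in the $M$-norm rather than the Euclidean norm, so converting it to the $\ell_2$ guarantee of Definition~\ref{def:delta_approx_inversion} introduces condition-number factors that must be checked to remain inside the logarithm.
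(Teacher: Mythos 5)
Your proposal is correct and takes essentially the same route as the paper's proof: the same split of the error into the $\delta'$ inversion error plus a propagation error bounded via $\|M(\param)^{-1}\|\le \lambda/\mu$ (from $\mu$-strong convexity of $G_i$), $L$-smoothness and $B$-Lipschitzness of $\fn_i$, and the $\rho$-Lipschitz Hessian of $\fnht_i$, followed by the same instantiation of the accelerated-gradient/CG rates on the inner problem and the quadratic sub-problem of Eq.~\ref{eq:sub_problem}, and the memory bound from Assumption~\ref{assumption:HVP}. The only (minor, and favorable) deviation is your use of the exact resolvent identity $A^{-1}-B^{-1}=A^{-1}(B-A)B^{-1}$ where the paper uses a Neumann-type bound on $\left(\eye+M^{-1}\Delta\right)^{-1}$; this removes the side condition $\delta<\mu/(2\rho)$ required in Lemma~\ref{lemma:perturbation} and sharpens the paper's constant $2\lambda\rho B/\mu^{2}$ to $\lambda\rho B/\mu^{2}$, while your closing remark about converting CG's $M$-norm guarantee to the $\ell_2$ guarantee is exactly why the paper instead invokes the Lemma~\ref{lemma:opt} solver (with $\|x^\star\|\le(\lambda/\mu)B$) for the sub-problem, a discrepancy harmless inside the $\tilde{O}(\cdot)$ logarithm.
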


The formal statement of the theorem and the proof are provided the
appendix. Importantly, the algorithm's memory requirement is
equivalent to the memory needed for Hessian-vector products which is a
small constant factor over the memory required for gradient
computations, assuming the reverse mode of
auto-differentiation is used. 
Finally, the next corollary shows that iMAML efficiently finds a
stationary point of $F(\cdot)$, due to iMAML having 
controllable exact-solve error.

\begin{corollary} (iMAML finds stationary points)
Suppose the conditions of Theorem~\ref{thm:approx_error} hold and that
$F(\cdot)$ is an $L_F$ smooth function. Then
the implicit MAML algorithm (Algorithm~\ref{alg:iMAML}), when the
batch size is $M$ (so that we are doing gradient descent), will find a
point $\prior$ such that:
\[
\|\nabla F(\prior) \| \leq \eps
\]
in a number of calls to \texttt{Implicit-Meta-Gradient} that is at
most $\frac{4M L_f (F(0) - \min_\prior F(\prior))}{\eps^2}$.
Furthermore, the total number of gradient computations (of $\pgrad
\fnht_i$) is  at most $\tilde{O}\left( M \sqrt{\kappa} \frac{L_f (F(0) -
\min_\theta F(\theta))}{\eps^2} \log \left(
\frac{\textrm{poly}(\kappa,D,B,L,\rho,\mu,\lambda)}{\epsilon}
\right) \right)$, and only $\tilde{O}(\mathrm{Mem}(\pgrad \fnht_i))$ memory is required throughout.
\end{corollary}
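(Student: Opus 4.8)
The plan is to combine the per-task exact-solve guarantee of Theorem~\ref{thm:approx_error} with a textbook inexact-gradient-descent analysis for smooth nonconvex objectives, and then to tally compute and memory across the outer iterations.

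First I would apply Theorem~\ref{thm:approx_error} to every task with target accuracy $\eps' := \eps/\sqrt{2}$, so that each returned task meta-gradient satisfies $\|\bm{g}_i - \grad_\prior \fn_i(\algstar_i(\prior))\| \leq \eps'$. Because the batch size equals $M$, the averaged outer update is $\hat{\pgrad} F(\prior) = \frac{1}{M}\sum_{i=1}^M \bm{g}_i$, while the true gradient decomposes as $\nabla F(\prior) = \frac{1}{M}\sum_{i=1}^M \grad_\prior \fn_i(\algstar_i(\prior))$; a single application of the triangle inequality then gives $\|\hat{\pgrad} F(\prior) - \nabla F(\prior)\| \leq \eps'$. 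I want to stress that this step relies on Theorem~\ref{thm:approx_error} bounding the \emph{exact-solve} error (the deviation from the gradient of the genuine objective $F$ in Eq.~\ref{eq:update_rule}), which is exactly what permits treating the iMAML update as inexact gradient descent on $F$ itself rather than on a perturbed surrogate.

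Next I would run the descent lemma for the $L_F$-smooth function $F$ with outer step size $\eta = 1/L_F$. Writing the update error as $\bm{e}_t := \hat{\pgrad} F(\prior_t) - \nabla F(\prior_t)$ with $\|\bm{e}_t\| \leq \eps'$, the smoothness inequality should expand, at this particular step size, to
\[
F(\prior_{t+1}) \leq F(\prior_t) - \tfrac{1}{2 L_F}\|\nabla F(\prior_t)\|^2 + \tfrac{1}{2 L_F}\|\bm{e}_t\|^2 ,
\]
since the cross terms proportional to $\langle \nabla F(\prior_t), \bm{e}_t\rangle$ cancel when $\eta = 1/L_F$. Telescoping over $N$ outer iterations and using $F(\prior_N) \geq \min_\prior F(\prior)$ would yield
\[
\min_{t < N} \|\nabla F(\prior_t)\|^2 \leq \frac{2 L_F \big( F(\prior_0) - \min_\prior F(\prior) \big)}{N} + (\eps')^2 .
\]
Choosing $(\eps')^2 = \eps^2/2$ pins the noise floor at $\eps^2/2$, and taking $N = 4 L_F (F(\prior_0) - \min_\prior F(\prior))/\eps^2$ forces the first term below $\eps^2/2$, so at least one iterate achieves $\|\nabla F(\prior)\| \leq \eps$. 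With $\prior_0 = 0$ this is exactly $N$ outer steps; since each outer step issues one \texttt{Implicit-Meta-Gradient} call per task, the number of calls is $M N = 4 M L_F (F(0) - \min_\prior F(\prior))/\eps^2$, as claimed.

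Finally I would account for the aggregate cost. Each of the $MN$ calls costs $\tilde{O}\big(\sqrt{\kappa}\log(\mathrm{poly}(\kappa,D,B,L,\rho,\mu,\lambda)/\eps')\big)$ gradient evaluations of $\fnht_i$ by Theorem~\ref{thm:approx_error}, and since $\eps' = \Theta(\eps)$ the logarithmic factor is unaffected up to constants; multiplying through reproduces the stated total gradient count. For memory, the tasks of a mini-batch can be processed sequentially, each $\bm{g}_i$ accumulated into a running sum and its workspace then released, so the peak footprint never exceeds that of a single call, i.e.\ $\tilde{O}(\mathrm{Mem}(\pgrad \fnht_i))$. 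I expect the one step that needs genuine care to be the inexact-descent argument: one must verify the cross-term cancellation at $\eta = 1/L_F$ so that the gradient noise enters only as the benign additive floor $(\eps')^2$ and does not spoil the per-step decrease. Coupled with the exact-solve guarantee, this is precisely what upgrades approximate convergence into convergence to a stationary point of the true meta-objective $F$.
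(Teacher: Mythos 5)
Your proposal is correct and takes essentially the approach the corollary implicitly relies on (the paper states this corollary without a written proof, treating the inexact-gradient-descent step as standard): per-task accuracy $\eps'=\eps/\sqrt{2}$ from Theorem~\ref{thm:approx_error}, averaging with the triangle inequality, and the descent lemma at $\eta = 1/L_F$, where the cross terms $\langle \nabla F(\prior_t), \bm{e}_t\rangle$ indeed cancel exactly, leaving the benign noise floor $\tfrac{1}{2L_F}\|\bm{e}_t\|^2$ and yielding the claimed $4ML_F(F(0)-\min_\prior F(\prior))/\eps^2$ call count. Your compute tally (the $\Theta(\eps)$ change in target accuracy only shifts the log factor by a constant) and the sequential-per-task memory argument likewise match the stated bounds.
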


\section{Experimental Results and Discussion}
\label{sec:experiments}
In our experimental evaluation, we aim to answer the following questions empirically: (1) Does the iMAML algorithm asymptotically compute the exact meta-gradient? (2) With finite iterations, does iMAML approximate the meta-gradient more accurately compared to MAML? (3) How does the computation and memory requirements of iMAML compare with MAML? (4) Does iMAML lead to better results in realistic meta-learning problems?
We have answered (1) - (3) through our theoretical analysis, and now attempt to validate it through numerical simulations. For (1) and (2), we will use a simple synthetic example for which we can compute the exact meta-gradient and compare against it (exact-solve error, see definition~\ref{def:exact_error}). For (3) and (4), we will use the common few-shot image recognition domains of Omniglot and Mini-ImageNet.

To study the question of meta-gradient accuracy, Figure~\ref{fig:experiments} considers a synthetic regression example, where the predictions are linear in parameters. This provides an analytical expression for $\algstar_i$ allowing us to compute the true meta-gradient. We fix gradient descent~(GD) to be the inner optimizer for both MAML and iMAML. The problem is constructed so that the condition number $(\kappa)$ is large, thereby necessitating many GD steps. We find that both iMAML and MAML asymptotically match the exact meta-gradient, but iMAML computes a better approximation in finite iterations. 
We observe that with 2 CG iterations, iMAML incurs a small terminal error. This is consistent with our theoretical analysis. In Algorithm~\ref{alg:implicit_grad}, $\delta$ is dominated by $\delta'$ when only a small number of CG steps are used. However, the terminal error vanishes with just 5 CG steps. The computational cost of 1 CG step is comparable to 1 inner GD step with the MAML algorithm, since both require 1 hessian-vector product (see section~\ref{sec:hvp} for discussion). Thus, the computational cost as well as memory of iMAML with 100 inner GD steps is significantly smaller than MAML with 100 GD steps.


To study (3), we turn to the Omniglot dataset~\cite{omniglot} which is a popular few-shot image recognition domain. 
Figure~\ref{fig:experiments} presents compute and memory trade-off for MAML and iMAML (on 20-way, 5-shot Omniglot). Memory for iMAML is based on Hessian-vector products and is independent of the number of GD steps in the inner loop. The memory use is also independent of the number of CG iterations, since the intermediate computations need not be stored in memory. On the other hand, memory for MAML grows linearly in grad steps, reaching the capacity of a 12 GB GPU in approximately 16 steps. First-order MAML (FOMAML) does not back-propagate through the optimization process, and thus the computational cost is only that of performing gradient descent, which is needed for all the algorithms. 
The computational cost for iMAML is also similar to FOMAML along with a constant overhead for CG that depends on the number of CG steps. Note however, that FOMAML does not compute an accurate meta-gradient, since it ignores the Jacobian. Compared to FOMAML, the compute cost of MAML grows at a faster rate. FOMAML requires only gradient computations, while backpropagating through GD (as done in MAML) requires a Hessian-vector products at each iteration, which are more expensive.

\begin{figure}[b!]
    \centering
    \begin{subfigure}[b]{0.3\textwidth}
         \centering
         \includegraphics[width=\textwidth]{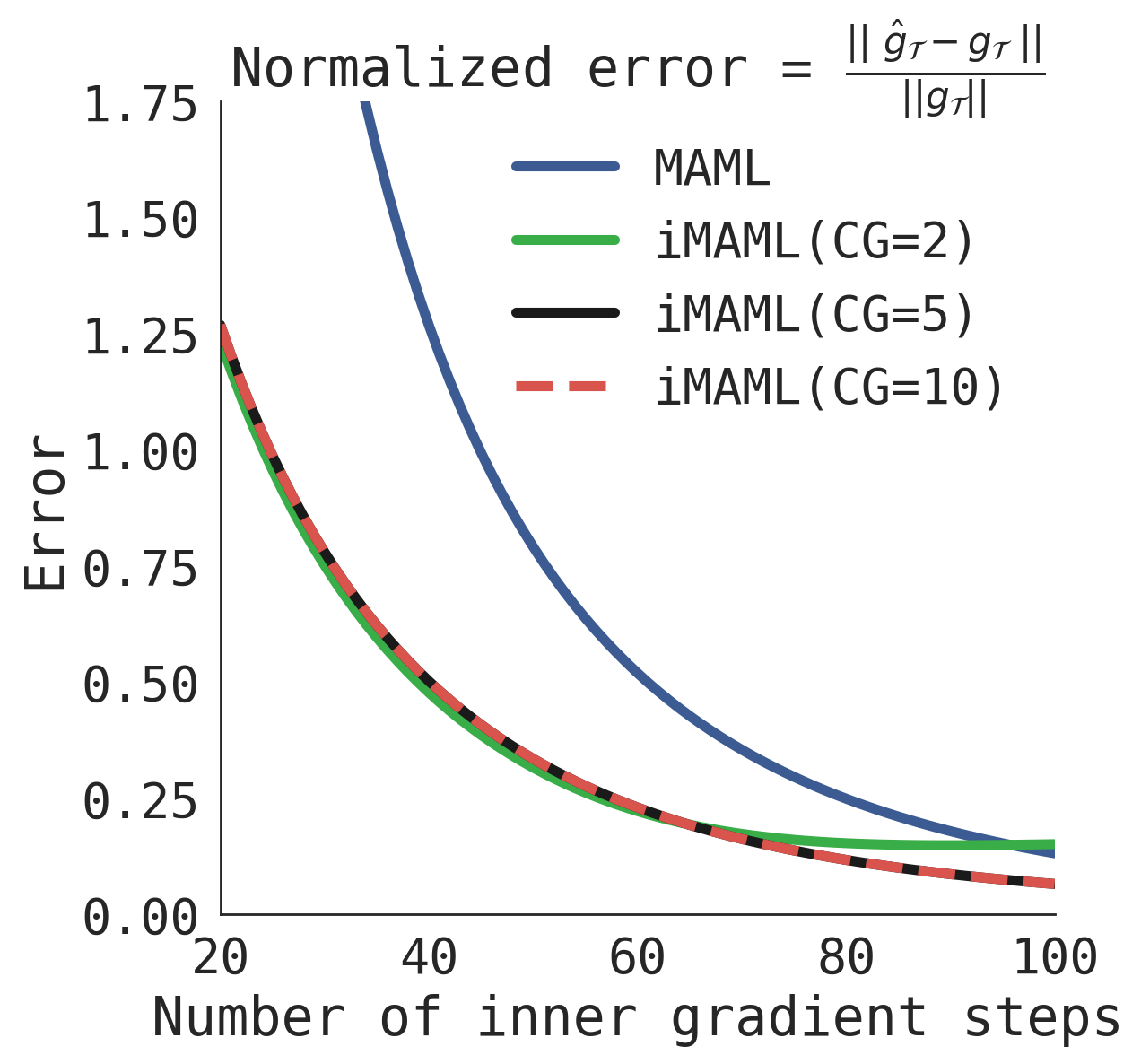} \\
         (a)
    \end{subfigure}
    \hspace*{10pt}
    \begin{subfigure}[b]{0.63\textwidth}
         \centering
         \includegraphics[width=\textwidth]{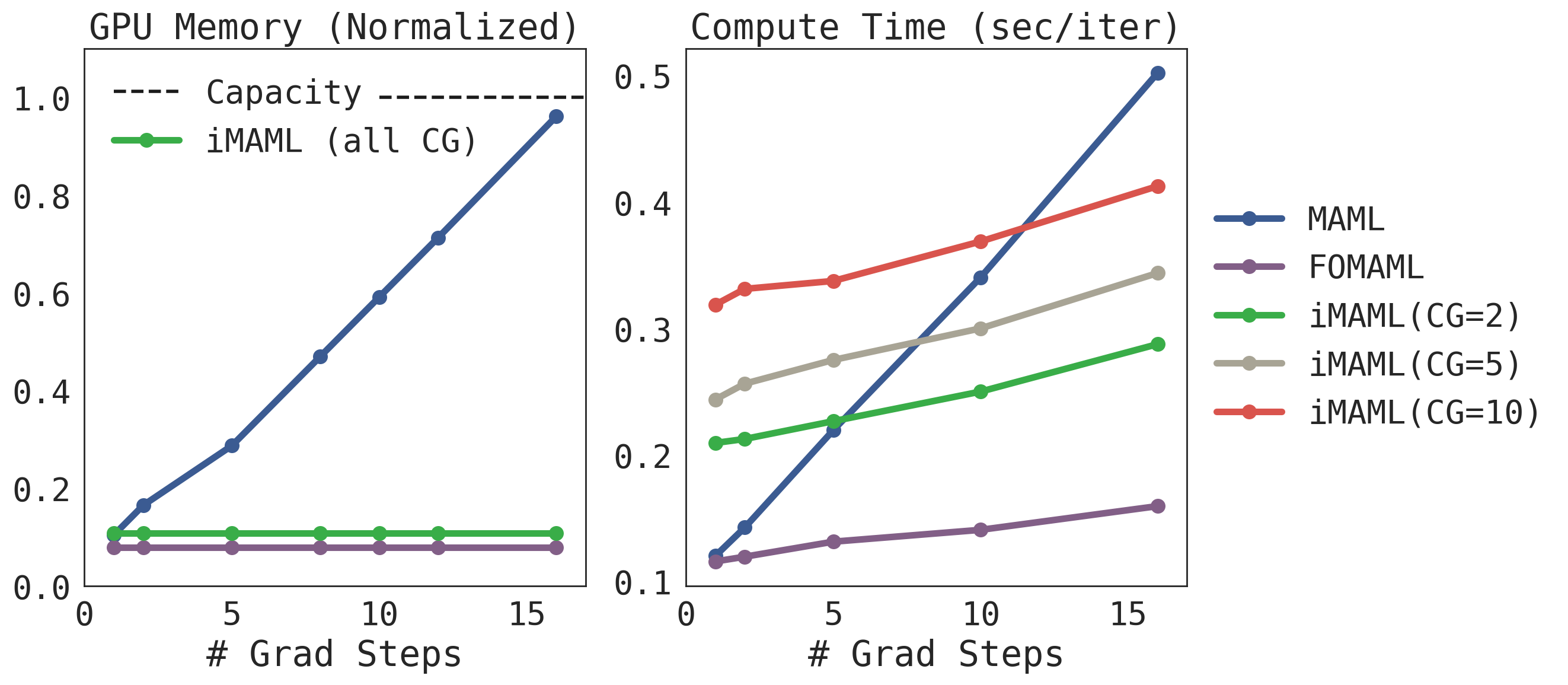} \\
         (b)
    \end{subfigure}
    \caption{\footnotesize Accuracy, Computation, and Memory tradeoffs of iMAML, MAML, and FOMAML. (a) Meta-gradient accuracy level in synthetic example. Computed gradients are compared against the exact meta-gradient per Def~\ref{def:exact_error}. (b) Computation and memory trade-offs with 4 layer CNN on 20-way-5-shot Omniglot task. We implemented iMAML in PyTorch, and for an apples-to-apples comparison, we use a PyTorch implementation of MAML from: \url{https://github.com/dragen1860/MAML-Pytorch}}
    \label{fig:experiments}
\end{figure}

Finally, we study empirical performance of iMAML on the Omniglot and Mini-ImageNet domains.
Following the few-shot learning protocol in prior work~\cite{matchingnets}, we run the iMAML algorithm on the dataset for different numbers of class labels and shots (in the N-way, K-shot setting), and compare two variants of iMAML with published results of the most closely related algorithms: MAML, FOMAML, and Reptile. While these methods are not state-of-the-art on this benchmark, they provide an apples-to-apples comparison for studying the use of implicit gradients in optimization-based meta-learning. For a fair comparison, we use the identical convolutional architecture as these prior works. Note however that architecture tuning can lead to better results for all algorithms~\cite{autometa}.

The first variant of iMAML we consider involves solving the inner level problem (the regularized objective function in Eq.~\ref{eq:update_rule}) using gradient descent. The meta-gradient is computed using conjugate gradient, and the meta-parameters are updated using Adam. This presents the most straightforward comparison with MAML, which would follow a similar procedure, but backpropagate through the path of optimization as opposed to invoking implicit differentiation.
The second variant of iMAML uses a second order method for the inner level problem. In particular, we consider the Hessian-free or Newton-CG~\cite{NocedalBook, Martens2010DeepLV} method. This method makes a local quadratic approximation to the objective function (in our case, $G(\dparam, \prior)$ and approximately computes the Newton search direction using CG. Since CG requires only Hessian-vector products, this way of approximating the Newton search direction is scalable to large deep neural networks. 
The step size can be computed using regularization, damping, trust-region, or linesearch. We use a linesearch on the training loss in our experiments to also illustrate how our method can handle non-differentiable inner optimization loops. We refer the readers to Nocedal \& Wright~\cite{NocedalBook} and Martens~\cite{Martens2010DeepLV} for a more detailed exposition of this optimization algorithm. Similar approaches have also gained prominence in reinforcement learning~\cite{trpo, Rajeswaran17nips}.

\begin{table}[t!]
\label{table:omniglot_results}
\begin{center}
\caption{\footnotesize Omniglot results. MAML results are taken from the original work of Finn et al.~\cite{maml}, and first-order MAML and Reptile results are from Nichol et al.~\cite{nichol2018first}. iMAML with gradient descent (GD) uses 16 and 25 steps for 5-way and 20-way tasks respectively. iMAML with Hessian-free uses 5 CG steps to compute the search direction and performs line-search to pick step size. Both versions of iMAML use $\lambda=2.0$ for regularization, and 5 CG steps to compute the task meta-gradient.}  \vspace*{5pt}
\footnotesize
\renewcommand{\arraystretch}{1.1}
\begin{tabular}{l|c|c|c|c}
\toprule
Algorithm    & 5-way 1-shot     & 5-way 5-shot   & 20-way 1-shot  & 20-way 5-shot  \\ \midrule
MAML~\cite{maml}        & 98.7 $\pm$ 0.4\% & \textbf{99.9 $\pm$ 0.1\%}   & 95.8 $\pm$ 0.3\%   & 98.9 $\pm$ 0.2\%   \\ \hline
first-order MAML~\cite{maml}       & 98.3 $\pm$ 0.5\%     & 99.2 $\pm$ 0.2\%   & 89.4 $\pm$ 0.5\%   & 97.9 $\pm$ 0.1\%   \\ \hline
Reptile~\cite{nichol2018first}      & 97.68 $\pm$ 0.04\%   & 99.48 $\pm$ 0.06\% & 89.43 $\pm$ 0.14\% & 97.12 $\pm$ 0.32\% \\ \hline
iMAML, GD (ours) & 99.16 $\pm$ 0.35\% & 99.67 $\pm$ 0.12\% & 94.46 $\pm$ 0.42\%  & 98.69 $\pm$ 0.1\%  \\ \hline
iMAML, Hessian-Free (ours) & \textbf{99.50 $\pm$ 0.26\%}    & 99.74 $\pm$ 0.11\% & \textbf{96.18 $\pm$ 0.36\%} & \textbf{99.14 $\pm$ 0.1\%}  \\ \bottomrule
\end{tabular}
\end{center}
\vspace{-0.1cm}
\end{table}

\begin{wraptable}{r}{0.45\textwidth}
\label{table:imagenet_results}
\centering
\caption{\footnotesize Mini-ImageNet 5-way-1-shot accuracy}
\begin{tabular}{l|c}
\toprule
Algorithm    & 5-way 1-shot   \\ \midrule
MAML & 48.70 $\pm$ 1.84 \%  \\
first-order MAML & 48.07 $\pm$ 1.75 \% \\
Reptile & 49.97 $\pm$ 0.32 \% \\
iMAML GD (ours) &  48.96 $\pm$ 1.84 \% \\
iMAML HF (ours) &  49.30 $\pm$ 1.88 \% \\
\bottomrule
\end{tabular}
\label{table:imagenet}
\end{wraptable}
Tables~2~and~3 present the results on Omniglot and Mini-ImageNet, respectively. On the Omniglot domain, we find that the GD version of iMAML is competitive with the full MAML algorithm, and substatially better than its approximations (i.e., first-order MAML and Reptile), especially for the harder 20-way tasks. We also find that iMAML with Hessian-free optimization performs substantially better than the other methods, suggesting that powerful optimizers in the inner loop can offer benifits to meta-learning. In the Mini-ImageNet domain, we find that iMAML performs better than MAML and FOMAML. We used $\lambda=0.5$ and $10$ gradient steps in the inner loop. We did not perform an extensive hyperparameter sweep, and expect that the results can improve with better hyperparameters. $5$ CG steps were used to compute the meta-gradient. The Hessian-free version also uses $5$ CG steps for the search direction. Additional experimental details are Appendix~\ref{app:experiments}.

\vspace{-0.1cm}
\section{Related Work}
\vspace{-0.1cm}

Our work considers the general meta-learning problem~\cite{schmidhuber1987, thrun, naik}, including few-shot learning~\cite{omniglot,matchingnets}. Meta-learning approaches can generally be categorized into metric-learning approaches that learn an embedding space where non-parametric nearest neighbors works well~\cite{siameseoneshot,matchingnets,snell2017prototypical,oreshkin2018tadam,allen2019infinite}, black-box approaches that train a recurrent or recursive neural network to take datapoints as input and produce weight updates~\cite{hochreiter,andrychowicz2016learning,li2016learning,ravi2016optimization} or predictions for new inputs~\cite{mann,rl2,learningrl,munkhdalai2017meta,mishra2017simple}, and optimization-based approaches that use bi-level optimization to embed learning procedures, such as gradient descent, into the meta-optimization problem~\cite{maml,finn2018learning,bertinetto2018meta,zintgraf2018caml,metasgd,finn2018probabilistic,zhou2018deep,harrison2018meta}. Hybrid approaches have also been considered to combine the benefits of different approaches~\cite{rusu2018meta,triantafillou2019meta}.
We build upon optimization-based approaches, particularly the MAML algorithm~\cite{maml}, which meta-learns an initial set of parameters such that gradient-based fine-tuning leads to good generalization. 
Prior work has considered a number of inner loops, ranging from a very general setting where all parameters are adapted using gradient descent~\cite{maml}, to more structured and specialized settings, such as ridge regression~\cite{bertinetto2018meta}, Bayesian linear regression~\cite{harrison2018meta}, and simulated annealing~\cite{alet2018modular}.
The main difference between our work and these approaches is that we show how to analytically derive the gradient of the outer objective without differentiating through the inner learning procedure. 

Mathematically, we view optimization-based meta-learning as a bi-level optimization problem. Such problems have been studied in the context of few-shot meta-learning (as discussed previously), gradient-based hyperparameter optimization~\cite{maclaurin2015gradient, pedregosa2016hyperparameter, franceschi2017forward, domke2012, Do2007EfficientMH}, and a range of other settings~\cite{amos2017optnet,landry2019differentiable}. Some prior works have derived implicit gradients for related problems~\cite{pedregosa2016hyperparameter, domke2012, amos2017optnet} while others propose innovations to aid back-propagation through the optimization path for specific algorithms~\cite{maclaurin2015gradient, franceschi2017forward,Hascot2006EnablingUC}, or approximations like truncation~\cite{Shaban2018TruncatedBF}. While the broad idea of implicit differentiation is well known, it has not been empirically demonstrated in the past for learning more than a few parameters (e.g., hyperparameters), or highly structured settings such as quadratic programs~\cite{amos2017optnet}. In contrast, our method meta-trains deep neural networks with thousands of parameters.
Closest to our setting is the recent work of Lee et al.~\cite{lee2019meta}, which uses implicit differentiation for quadratic programs in a final SVM layer. In contrast, our formulation allows for adapting the full network for generic objectives (beyond hinge-loss), thereby allowing for wider applications. 

We also note that prior works involving implicit differentiation make a strong assumption of an exact solution in the inner level, thereby providing only asymptotic guarantees. In contrast, we provide finite time guarantees which allows us to analyze the case where the inner level is solved approximately. In practice, the inner level is likely to be solved using iterative optimization algorithms like gradient descent, which only return approximate solutions with finite iterations. Thus, this paper places implicit gradient methods under a strong theoretical footing for practically use.

\section{Conclusion}

In this paper, we develop a method for optimization-based meta-learning that removes the need for differentiating through the inner optimization path, allowing us to decouple the outer meta-gradient computation from the choice of inner optimization algorithm. We showed how this gives us significant gains in compute and memory efficiency, and also conceptually allows us to use a variety of inner optimization methods. While we focused on developing the foundations and theoretical analysis of this method, we believe that this work opens up a number of interesting avenues for future study.

\textbf{Broader classes of inner loop procedures.} While we studied different gradient-based optimization methods in the inner loop, iMAML can in principle be used with a variety of inner loop algorithms, including dynamic programming methods such as $Q$-learning, two-player adversarial games such as GANs, energy-based models~\cite{Mordatch18concept}, and actor-critic RL methods, and higher-order model-based trajectory optimization methods.
This significantly expands the kinds of problems that optimization-based meta-learning can be applied to.

\textbf{More flexible regularizers.} We explored one very simple regularization, $\ell_2$ regularization to the parameter initialization, which already increases the expressive power over the implicit regularization that MAML provides through truncated gradient descent. To further allow the model to flexibly regularize the inner optimization, a simple extension of iMAML is to learn a vector- or matrix-valued $\lambda$, which would enable the meta-learner model to co-adapt and co-regularize various parameters of the model. Regularizers that act on parameterized density functions would also enable meta-learning to be effective for few-shot density estimation.

\clearpage

\section*{Acknowledgements}
Aravind Rajeswaran thanks Emo Todorov for valuable discussions about implicit gradients and potential application domains; Aravind Rajeswaran also thanks Igor Mordatch and Rahul Kidambi for helpful discussions and feedback. Sham Kakade acknowledges funding from the Washington Research Foundation for innovation in Data-intensive Discovery; Sham Kakade also graciously acknowledges support from ONR award N00014-18-1-2247, NSF Award CCF-1703574, and NSF CCF 1740551 award.

\bibliography{citations}

\begin{thebibliography}{60}
\providecommand{\natexlab}[1]{#1}
\providecommand{\url}[1]{\texttt{#1}}
\expandafter\ifx\csname urlstyle\endcsname\relax
  \providecommand{\doi}[1]{doi: #1}\else
  \providecommand{\doi}{doi: \begingroup \urlstyle{rm}\Url}\fi

\bibitem[Al-Shedivat et~al.(2017)Al-Shedivat, Bansal, Burda, Sutskever,
  Mordatch, and Abbeel]{AlShedivat2017ContinuousAV}
Maruan Al-Shedivat, Trapit Bansal, Yuri Burda, Ilya Sutskever, Igor Mordatch,
  and Pieter Abbeel.
\newblock Continuous adaptation via meta-learning in nonstationary and
  competitive environments.
\newblock \emph{CoRR}, abs/1710.03641, 2017.

\bibitem[Alet et~al.(2018)Alet, Lozano-P{\'e}rez, and
  Kaelbling]{alet2018modular}
Ferran Alet, Tom{\'a}s Lozano-P{\'e}rez, and Leslie~P Kaelbling.
\newblock Modular meta-learning.
\newblock \emph{arXiv preprint arXiv:1806.10166}, 2018.

\bibitem[Allen et~al.(2019)Allen, Shelhamer, Shin, and
  Tenenbaum]{allen2019infinite}
Kelsey~R Allen, Evan Shelhamer, Hanul Shin, and Joshua~B Tenenbaum.
\newblock Infinite mixture prototypes for few-shot learning.
\newblock \emph{arXiv preprint arXiv:1902.04552}, 2019.

\bibitem[Amos and Kolter(2017)]{amos2017optnet}
Brandon Amos and J~Zico Kolter.
\newblock Optnet: Differentiable optimization as a layer in neural networks.
\newblock In \emph{Proceedings of the 34th International Conference on Machine
  Learning-Volume 70}, pages 136--145. JMLR. org, 2017.

\bibitem[Andrychowicz et~al.(2016)Andrychowicz, Denil, Gomez, Hoffman, Pfau,
  Schaul, Shillingford, and De~Freitas]{andrychowicz2016learning}
Marcin Andrychowicz, Misha Denil, Sergio Gomez, Matthew~W Hoffman, David Pfau,
  Tom Schaul, Brendan Shillingford, and Nando De~Freitas.
\newblock Learning to learn by gradient descent by gradient descent.
\newblock In \emph{Advances in Neural Information Processing Systems}, pages
  3981--3989, 2016.

\bibitem[Baur and Strassen(1983)]{Baur1983TCo}
Walter Baur and Volker Strassen.
\newblock The complexity of partial derivatives.
\newblock \emph{Theoretical Computer Science}, 22:\penalty0 317--330, 1983.

\bibitem[Baydin et~al.(2015)Baydin, Pearlmutter, and
  Radul]{Baydin2015AutomaticDI}
Atilim~Gunes Baydin, Barak~A. Pearlmutter, and Alexey Radul.
\newblock Automatic differentiation in machine learning: a survey.
\newblock \emph{CoRR}, abs/1502.05767, 2015.

\bibitem[Bertinetto et~al.(2018)Bertinetto, Henriques, Torr, and
  Vedaldi]{bertinetto2018meta}
Luca Bertinetto, Joao~F Henriques, Philip~HS Torr, and Andrea Vedaldi.
\newblock Meta-learning with differentiable closed-form solvers.
\newblock \emph{arXiv preprint arXiv:1805.08136}, 2018.

\bibitem[Bubeck(2015)]{bubeck_book}
Sebastien Bubeck.
\newblock \emph{Convex optimization: Algorithms and complexity.}
\newblock Foundations and Trends in Machine Learning, 2015.

\bibitem[Do et~al.(2007)Do, Foo, and Ng]{Do2007EfficientMH}
Chuong~B. Do, Chuan-Sheng Foo, and Andrew~Y. Ng.
\newblock Efficient multiple hyperparameter learning for log-linear models.
\newblock In \emph{NIPS}, 2007.

\bibitem[Domke(2012)]{domke2012}
Justin Domke.
\newblock Generic methods for optimization-based modeling.
\newblock In \emph{AISTATS}, 2012.

\bibitem[Duan et~al.(2016)Duan, Schulman, Chen, Bartlett, Sutskever, and
  Abbeel]{rl2}
Yan Duan, John Schulman, Xi~Chen, Peter~L Bartlett, Ilya Sutskever, and Pieter
  Abbeel.
\newblock Rl2: Fast reinforcement learning via slow reinforcement learning.
\newblock \emph{arXiv:1611.02779}, 2016.

\bibitem[Finn(2018)]{finn2018learning}
Chelsea Finn.
\newblock \emph{Learning to Learn with Gradients}.
\newblock PhD thesis, UC Berkeley, 2018.

\bibitem[Finn and Levine(2017)]{universality}
Chelsea Finn and Sergey Levine.
\newblock Meta-learning and universality: Deep representations and gradient
  descent can approximate any learning algorithm.
\newblock \emph{arXiv:1710.11622}, 2017.

\bibitem[Finn et~al.(2017{\natexlab{a}})Finn, Abbeel, and Levine]{maml}
Chelsea Finn, Pieter Abbeel, and Sergey Levine.
\newblock Model-agnostic meta-learning for fast adaptation of deep networks.
\newblock \emph{International Conference on Machine Learning (ICML)},
  2017{\natexlab{a}}.

\bibitem[Finn et~al.(2017{\natexlab{b}})Finn, Yu, Zhang, Abbeel, and
  Levine]{finn2017one}
Chelsea Finn, Tianhe Yu, Tianhao Zhang, Pieter Abbeel, and Sergey Levine.
\newblock One-shot visual imitation learning via meta-learning.
\newblock \emph{arXiv preprint arXiv:1709.04905}, 2017{\natexlab{b}}.

\bibitem[Finn et~al.(2018)Finn, Xu, and Levine]{finn2018probabilistic}
Chelsea Finn, Kelvin Xu, and Sergey Levine.
\newblock Probabilistic model-agnostic meta-learning.
\newblock In \emph{Advances in Neural Information Processing Systems}, pages
  9516--9527, 2018.

\bibitem[Finn et~al.(2019)Finn, Rajeswaran, Kakade, and Levine]{ftml}
Chelsea Finn, Aravind Rajeswaran, Sham Kakade, and Sergey Levine.
\newblock Online meta-learning.
\newblock \emph{International Conference on Machine Learning (ICML)}, 2019.

\bibitem[Franceschi et~al.(2017)Franceschi, Donini, Frasconi, and
  Pontil]{franceschi2017forward}
Luca Franceschi, Michele Donini, Paolo Frasconi, and Massimiliano Pontil.
\newblock Forward and reverse gradient-based hyperparameter optimization.
\newblock In \emph{Proceedings of the 34th International Conference on Machine
  Learning-Volume 70}, pages 1165--1173. JMLR. org, 2017.

\bibitem[Grant et~al.(2018)Grant, Finn, Levine, Darrell, and Griffiths]{erin}
Erin Grant, Chelsea Finn, Sergey Levine, Trevor Darrell, and Thomas Griffiths.
\newblock Recasting gradient-based meta-learning as hierarchical bayes.
\newblock \emph{International Conference on Learning Representations (ICLR)},
  2018.

\bibitem[Griewank(1993)]{Griewank1993SomeBO}
Andreas Griewank.
\newblock Some bounds on the complexity of gradients, jacobians, and hessians.
\newblock 1993.

\bibitem[Griewank and Walther(2008)]{Griewank:2008:EDP:1455489}
Andreas Griewank and Andrea Walther.
\newblock \emph{Evaluating Derivatives: Principles and Techniques of
  Algorithmic Differentiation}.
\newblock Society for Industrial and Applied Mathematics, Philadelphia, PA,
  USA, second edition, 2008.

\bibitem[Harrison et~al.(2018)Harrison, Sharma, and Pavone]{harrison2018meta}
James Harrison, Apoorva Sharma, and Marco Pavone.
\newblock Meta-learning priors for efficient online bayesian regression.
\newblock \emph{arXiv preprint arXiv:1807.08912}, 2018.

\bibitem[Hasco{\"e}t and Araya-Polo(2006)]{Hascot2006EnablingUC}
Laurent Hasco{\"e}t and Mauricio Araya-Polo.
\newblock Enabling user-driven checkpointing strategies in reverse-mode
  automatic differentiation.
\newblock \emph{CoRR}, abs/cs/0606042, 2006.

\bibitem[Hochreiter et~al.(2001)Hochreiter, Younger, and Conwell]{hochreiter}
Sepp Hochreiter, A~Steven Younger, and Peter~R Conwell.
\newblock Learning to learn using gradient descent.
\newblock In \emph{International Conference on Artificial Neural Networks},
  2001.

\bibitem[Jin et~al.(2017)Jin, Ge, Netrapalli, Kakade, and Jordan]{Jin2017HowTE}
Chi Jin, Rong Ge, Praneeth Netrapalli, Sham~M. Kakade, and Michael~I. Jordan.
\newblock How to escape saddle points efficiently.
\newblock In \emph{ICML}, 2017.

\bibitem[Kim et~al.(2018)Kim, Choi, Cha, Lee, Lee, Kim, Choi, and
  Kim]{autometa}
Jaehong Kim, Youngduck Choi, Moonsu Cha, Jung~Kwon Lee, Sangyeul Lee, Sungwan
  Kim, Yongseok Choi, and Jiwon Kim.
\newblock Auto-meta: Automated gradient based meta learner search.
\newblock \emph{arXiv preprint arXiv:1806.06927}, 2018.

\bibitem[Kingma and Ba(2015)]{adam}
Diederik Kingma and Jimmy Ba.
\newblock Adam: A method for stochastic optimization.
\newblock \emph{International Conference on Learning Representations (ICLR)},
  2015.

\bibitem[Koch(2015)]{siameseoneshot}
Gregory Koch.
\newblock Siamese neural networks for one-shot image recognition.
\newblock \emph{ICML Deep Learning Workshop}, 2015.

\bibitem[Lake et~al.(2011)Lake, Salakhutdinov, Gross, and Tenenbaum]{omniglot}
Brenden~M Lake, Ruslan Salakhutdinov, Jason Gross, and Joshua~B Tenenbaum.
\newblock One shot learning of simple visual concepts.
\newblock In \emph{Conference of the Cognitive Science Society (CogSci)}, 2011.

\bibitem[Landry et~al.(2019)Landry, Manchester, and
  Pavone]{landry2019differentiable}
Benoit Landry, Zachary Manchester, and Marco Pavone.
\newblock A differentiable augmented lagrangian method for bilevel nonlinear
  optimization.
\newblock \emph{arXiv preprint arXiv:1902.03319}, 2019.

\bibitem[Lee et~al.(2019)Lee, Maji, Ravichandran, and Soatto]{lee2019meta}
Kwonjoon Lee, Subhransu Maji, Avinash Ravichandran, and Stefano Soatto.
\newblock Meta-learning with differentiable convex optimization.
\newblock \emph{arXiv preprint arXiv:1904.03758}, 2019.

\bibitem[Li and Malik(2016)]{li2016learning}
Ke~Li and Jitendra Malik.
\newblock Learning to optimize.
\newblock \emph{arXiv preprint arXiv:1606.01885}, 2016.

\bibitem[Li et~al.(2017)Li, Zhou, Chen, and Li]{metasgd}
Zhenguo Li, Fengwei Zhou, Fei Chen, and Hang Li.
\newblock Meta-sgd: Learning to learn quickly for few-shot learning.
\newblock \emph{arXiv preprint arXiv:1707.09835}, 2017.

\bibitem[Maclaurin et~al.(2015)Maclaurin, Duvenaud, and
  Adams]{maclaurin2015gradient}
Dougal Maclaurin, David Duvenaud, and Ryan Adams.
\newblock Gradient-based hyperparameter optimization through reversible
  learning.
\newblock In \emph{International Conference on Machine Learning}, pages
  2113--2122, 2015.

\bibitem[Martens(2010)]{Martens2010DeepLV}
James Martens.
\newblock Deep learning via hessian-free optimization.
\newblock In \emph{ICML}, 2010.

\bibitem[Mi et~al.(2019)Mi, Huang, Zhang, and Faltings]{langauge_maml}
Fei Mi, Minlie Huang, Jiyong Zhang, and Boi Faltings.
\newblock Meta-learning for low-resource natural language generation in
  task-oriented dialogue systems.
\newblock \emph{arXiv preprint arXiv:1905.05644}, 2019.

\bibitem[Mishra et~al.(2017)Mishra, Rohaninejad, Chen, and
  Abbeel]{mishra2017simple}
Nikhil Mishra, Mostafa Rohaninejad, Xi~Chen, and Pieter Abbeel.
\newblock A simple neural attentive meta-learner.
\newblock \emph{arXiv preprint arXiv:1707.03141}, 2017.

\bibitem[Mordatch(2018)]{Mordatch18concept}
Igor Mordatch.
\newblock Concept learning with energy-based models.
\newblock \emph{CoRR}, abs/1811.02486, 2018.

\bibitem[Munkhdalai and Yu(2017)]{munkhdalai2017meta}
Tsendsuren Munkhdalai and Hong Yu.
\newblock Meta networks.
\newblock In \emph{Proceedings of the 34th International Conference on Machine
  Learning-Volume 70}, pages 2554--2563. JMLR. org, 2017.

\bibitem[Naik and Mammone(1992)]{naik}
Devang~K Naik and RJ~Mammone.
\newblock Meta-neural networks that learn by learning.
\newblock In \emph{International Joint Conference on Neural Netowrks (IJCNN)},
  1992.

\bibitem[Nesterov and Polyak(2006)]{Nesterov2006CubicRO}
Yurii Nesterov and Boris~T. Polyak.
\newblock Cubic regularization of newton method and its global performance.
\newblock \emph{Math. Program.}, 108:\penalty0 177--205, 2006.

\bibitem[Nichol et~al.(2018)Nichol, Achiam, and Schulman]{nichol2018first}
Alex Nichol, Joshua Achiam, and John Schulman.
\newblock On first-order meta-learning algorithms.
\newblock \emph{arXiv preprint arXiv:1803.02999}, 2018.

\bibitem[Nocedal and Wright(2000)]{NocedalBook}
Jorge Nocedal and Stephen~J. Wright.
\newblock Numerical optimization (springer series in operations research and
  financial engineering).
\newblock 2000.

\bibitem[Oreshkin et~al.(2018)Oreshkin, L{\'o}pez, and
  Lacoste]{oreshkin2018tadam}
Boris Oreshkin, Pau~Rodr{\'\i}guez L{\'o}pez, and Alexandre Lacoste.
\newblock Tadam: Task dependent adaptive metric for improved few-shot learning.
\newblock In \emph{Advances in Neural Information Processing Systems}, pages
  721--731, 2018.

\bibitem[Pedregosa(2016)]{pedregosa2016hyperparameter}
Fabian Pedregosa.
\newblock Hyperparameter optimization with approximate gradient.
\newblock \emph{arXiv preprint arXiv:1602.02355}, 2016.

\bibitem[Rajeswaran et~al.(2017)Rajeswaran, Lowrey, Todorov, and
  Kakade]{Rajeswaran17nips}
Aravind Rajeswaran, Kendall Lowrey, Emanuel Todorov, and Sham Kakade.
\newblock {Towards Generalization and Simplicity in Continuous Control}.
\newblock In \emph{NIPS}, 2017.

\bibitem[Ravi and Larochelle(2016)]{ravi2016optimization}
Sachin Ravi and Hugo Larochelle.
\newblock Optimization as a model for few-shot learning.
\newblock 2016.

\bibitem[Rusu et~al.(2018)Rusu, Rao, Sygnowski, Vinyals, Pascanu, Osindero, and
  Hadsell]{rusu2018meta}
Andrei~A Rusu, Dushyant Rao, Jakub Sygnowski, Oriol Vinyals, Razvan Pascanu,
  Simon Osindero, and Raia Hadsell.
\newblock Meta-learning with latent embedding optimization.
\newblock \emph{arXiv preprint arXiv:1807.05960}, 2018.

\bibitem[Santoro et~al.(2016)Santoro, Bartunov, Botvinick, Wierstra, and
  Lillicrap]{mann}
Adam Santoro, Sergey Bartunov, Matthew Botvinick, Daan Wierstra, and Timothy
  Lillicrap.
\newblock Meta-learning with memory-augmented neural networks.
\newblock In \emph{International Conference on Machine Learning (ICML)}, 2016.

\bibitem[Schmidhuber(1987)]{schmidhuber1987}
Jurgen Schmidhuber.
\newblock Evolutionary principles in self-referential learning.
\newblock \emph{Diploma thesis, Institut f. Informatik, Tech. Univ. Munich},
  1987.

\bibitem[Schulman et~al.(2015)Schulman, Levine, Abbeel, Jordan, and
  Moritz]{trpo}
John Schulman, Sergey Levine, Pieter Abbeel, Michael~I Jordan, and Philipp
  Moritz.
\newblock Trust region policy optimization.
\newblock In \emph{International Conference on Machine Learning (ICML)}, 2015.

\bibitem[Shaban et~al.(2018)Shaban, Cheng, Hirschey, and
  Boots]{Shaban2018TruncatedBF}
Amirreza Shaban, Ching-An Cheng, Olivia Hirschey, and Byron Boots.
\newblock Truncated back-propagation for bilevel optimization.
\newblock \emph{CoRR}, abs/1810.10667, 2018.

\bibitem[Snell et~al.(2017)Snell, Swersky, and Zemel]{snell2017prototypical}
Jake Snell, Kevin Swersky, and Richard Zemel.
\newblock Prototypical networks for few-shot learning.
\newblock In \emph{Advances in Neural Information Processing Systems}, pages
  4077--4087, 2017.

\bibitem[Thrun and Pratt(1998)]{thrun}
Sebastian Thrun and Lorien Pratt.
\newblock \emph{Learning to learn}.
\newblock Springer Science \& Business Media, 1998.

\bibitem[Triantafillou et~al.(2019)Triantafillou, Zhu, Dumoulin, Lamblin, Xu,
  Goroshin, Gelada, Swersky, Manzagol, and Larochelle]{triantafillou2019meta}
Eleni Triantafillou, Tyler Zhu, Vincent Dumoulin, Pascal Lamblin, Kelvin Xu,
  Ross Goroshin, Carles Gelada, Kevin Swersky, Pierre-Antoine Manzagol, and
  Hugo Larochelle.
\newblock Meta-dataset: A dataset of datasets for learning to learn from few
  examples.
\newblock \emph{arXiv preprint arXiv:1903.03096}, 2019.

\bibitem[Vinyals et~al.(2016)Vinyals, Blundell, Lillicrap, Wierstra,
  et~al.]{matchingnets}
Oriol Vinyals, Charles Blundell, Tim Lillicrap, Daan Wierstra, et~al.
\newblock Matching networks for one shot learning.
\newblock In \emph{Neural Information Processing Systems (NIPS)}, 2016.

\bibitem[Wang et~al.(2016)Wang, Kurth-Nelson, Tirumala, Soyer, Leibo, Munos,
  Blundell, Kumaran, and Botvinick]{learningrl}
Jane~X Wang, Zeb Kurth-Nelson, Dhruva Tirumala, Hubert Soyer, Joel~Z Leibo,
  Remi Munos, Charles Blundell, Dharshan Kumaran, and Matt Botvinick.
\newblock Learning to reinforcement learn.
\newblock \emph{arXiv:1611.05763}, 2016.

\bibitem[Zhou et~al.(2018)Zhou, Wu, and Li]{zhou2018deep}
Fengwei Zhou, Bin Wu, and Zhenguo Li.
\newblock Deep meta-learning: Learning to learn in the concept space.
\newblock \emph{arXiv preprint arXiv:1802.03596}, 2018.

\bibitem[Zintgraf et~al.(2018)Zintgraf, Shiarlis, Kurin, Hofmann, and
  Whiteson]{zintgraf2018caml}
Luisa~M Zintgraf, Kyriacos Shiarlis, Vitaly Kurin, Katja Hofmann, and Shimon
  Whiteson.
\newblock Fast context adaptation via meta-learning.
\newblock \emph{arXiv preprint arXiv:1810.03642}, 2018.

\end{thebibliography}
\bibliographystyle{plainnat}

\clearpage

\appendix

\section{Relationship between iMAML and Prior Algorithms}
\label{app:other_algs}

The presented iMAML algorithm has close connections, as well as notable differences, to a number of related algorithms like MAML~\cite{maml}, first-order MAML, and Reptile~\cite{nichol2018first}. Conventionally, these algorithms do not consider any explicit regularization in the inner-level and instead rely on early stopping, through only a few gradient descent steps. In our problem setting described in Eq.~\ref{eq:update_rule}, we consider an explicitly regularized inner-level problem (refer to discussion in Section~\ref{sec:proximal_regularization}). We describe the connections between the algorithms in this explicitly regularized setting below.

\textbf{MAML}. \ The MAML algorithm first invokes an iterative algorithm to solve the inner optimization problem (see definition~\ref{def:delta_approx_alg}). Subsequently, it backpropagates through the path of the optimization algorithm to update the meta-parameters as:
\[
\prior^{k+1} = \prior^k - \eta \ \frac{1}{M} \sum_{i=1}^M \grad_\prior \fn_i(\alg_i(\prior^k)).
\]
Since $\alg_i(\prior)$ approximates $\algstar_i(\prior)$, it can be viewed that both MAML and iMAML intend to perform the same idealized update in Eq.~\ref{eq:outer_update_rule}. However, they perform the meta-gradient computation very differently. MAML backpropagates through the path of an iterative algorithm, while iMAML computes the meta-gradient through the implicit Jacobian approach outlined in Section~\ref{sec:implicit_jacobian} (see Figure~\ref{fig:diagram} for a visual depiction). As a result, iMAML can be vastly more efficient in memory while having lesser or comparable computational requirements. It also allows for higher order optimization methods and non-differentiable components.

{\bf First-order MAML} ignores the effect of meta-parameters $\prior$ on task parameters $\{ \param_i \}$ in the meta-gradient computation and updates the meta-parameters as: 
$$\prior^{k+1} = \prior^k - \eta \ \frac{1}{M} \sum_{i=1}^M \pgrad_\param \fn_i (\param_i) \mid_{\param_i = \alg_i(\prior^k)}$$ 
Note that iMAML strictly generalizes this, since first-order MAML is simply iMAML when the conjugate gradient procedure is not invoked (or corresponds to 0 steps of CG). Thus, iMAML allows for an easy way to interpolate from first-order MAML to the full MAML algorithm.

{\bf Reptile}~\cite{nichol2018first}, similar to first-order MAML, ignores the dependence of task-parameters on meta-parameters. However, instead of following the gradients at $\param_i = \alg_i(\prior^k)$, Reptile uses the task-parameters as targets and slowly moves meta-parameters towards them: 
$$\prior^{k+1} = \prior^k - \eta \ \frac{1}{M} \sum_{i=1}^M (\prior^k - \param_i).$$ 
From the proximal point equation in the proof of Lemma~\ref{lemma:alg_derivative}, we have $\param_i = \prior^k - \frac{1}{\lambda} \pgrad_\param \fn_i(\param_i)$, using which we see that the Reptile equation becomes: \hbox{$\prior^{k+1}=\prior^k - \frac{\eta}{\lambda M} \sum_{i=1}^M \pgrad_\param \fn_i(\param_i)$}. Thus, Reptile and first-order MAML are identical in our problem formulation up to the choice of learning rate. Making the regularization explicit allows us to illustrate this equivalence.

\section{Optimization Preliminaries}

Let $f:\mathbb{R}^d\rightarrow \mathbb{R}$.
A function $f$ is $B$ Lipschitz (or $B$-bounded gradient norm) if for all $x\in \mathbb{R}^d$
\[
  ||\nabla f(x) || \leq B \, .
\]
Similarly, we say that a matrix valued function $M:\mathbb{R}^d\times \mathbb{R}^{d^\prime}\rightarrow \mathbb{R}$ is $\rho$-Lipschitz if
\[
|| M(x) - M(x^\prime) || \leq \rho ||x - x^\prime|| 
\, ,
\]
where $\|\cdot\|$ denotes the spectral norm.

We say that $f$  is $L$-smooth if for all $x,x'\in \mathbb{R}^d$
\[
  || \nabla f(x) - \nabla f(x^\prime) || \leq L ||x - x^\prime|| 
\]
and that $f$ is $\mu$-strongly convex if $f$ is convex and if for all $x,x'\in \mathbb{R}^d$,
\[
|| \nabla f(x) - \nabla f(x^\prime) || \geq \mu ||x - x^\prime|| 
\, .
\]

We will make use of the following black-box complexity of first-order gradient
methods for minimizing strongly convex and smooth
functions. 

\begin{lemma}\label{lemma:opt}
  ($\delta$-approximate solver; see~\cite{bubeck_book}) Suppose $f$ is
  a function that is $L$-smooth and $\mu$ strongly convex. Define
  $\kappa:=L/\mu$, and let $x^\star = \argmin f(x)$.  Nesterov's
  accelerated gradient descent can be used to find a point $x$ such
  that:
\[
\|x -x^\star\| \leq \delta
\]
using a number of gradient computations of $f$ that is bounded as follows:
\[
\textrm{\# gradient computations of } f(\cdot) \leq 2\sqrt{\kappa}
\,  \log\left( 2\kappa \frac{ \|x^\star\| }{\delta}\right) \, .
\]
\end{lemma}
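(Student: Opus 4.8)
The plan is to reduce the claim to the standard convergence guarantee for Nesterov's accelerated gradient descent (AGD) on smooth strongly convex functions, and then perform two routine conversions: from function-value suboptimality to iterate distance (using strong convexity), and from an accuracy target to an iteration count (by inverting the geometric rate). Since the lemma is already attributed to~\cite{bubeck_book}, the task is essentially book-keeping of constants rather than a new argument.

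First I would invoke the textbook AGD rate. Initializing at $x_1 = \zeros$ (so that $\|x_1 - x^\star\| = \|x^\star\|$) and running AGD with the standard momentum schedule for $t$ steps on an $L$-smooth, $\mu$-strongly convex $f$ with $\kappa = L/\mu$ gives
\[
f(x_t) - f(x^\star) \leq \frac{L+\mu}{2}\,\|x^\star\|^2\,\exp\!\left(-\frac{t-1}{\sqrt{\kappa}}\right).
\]
Next, strong convexity yields $\tfrac{\mu}{2}\|x_t - x^\star\|^2 \leq f(x_t) - f(x^\star)$, so dividing by $\mu/2$ and using $(L+\mu)/\mu = \kappa + 1$,
\[
\|x_t - x^\star\|^2 \leq (\kappa+1)\,\|x^\star\|^2\,\exp\!\left(-\frac{t-1}{\sqrt{\kappa}}\right).
\]
Taking square roots and demanding that the right-hand side be at most $\delta$, that is $\sqrt{\kappa+1}\,\|x^\star\|\exp\!\left(-\tfrac{t-1}{2\sqrt{\kappa}}\right) \leq \delta$, and then solving for $t$ gives the sufficient condition $t \geq 1 + 2\sqrt{\kappa}\,\log\!\left(\sqrt{\kappa+1}\,\|x^\star\|/\delta\right)$.

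Finally I would tidy the constant to match the stated form. Since $\kappa \geq 1$ we have $\sqrt{\kappa+1} \leq \sqrt{2\kappa} \leq 2\kappa$, hence $\log\!\left(\sqrt{\kappa+1}\,\|x^\star\|/\delta\right) \leq \log\!\left(2\kappa\|x^\star\|/\delta\right)$, and the additive $1$ is absorbed into the leading term whenever the logarithm is at least a constant; this yields the claimed bound of $2\sqrt{\kappa}\log\!\left(2\kappa\|x^\star\|/\delta\right)$ on the number of iterations. Because each AGD step uses exactly one gradient evaluation of $f$, the iteration count equals the number of gradient computations, which closes the argument. The only real subtlety---and it is cosmetic rather than conceptual---is tracking where the factor $2\kappa$ inside the logarithm originates: it is precisely the $(\kappa+1)$ factor created when converting the function-value bound into a distance bound, loosened to $2\kappa$ via $\kappa \geq 1$, together with the choice to initialize at the origin so that the initial distance equals $\|x^\star\|$.
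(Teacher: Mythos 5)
Your proof is correct and coincides with the paper's approach: the paper gives no proof of this lemma at all, citing it directly from Bubeck's monograph, and your argument is exactly the standard bookkeeping behind that citation --- invoke the textbook rate $f(x_t)-f(x^\star)\le \frac{\mu+L}{2}\|x_1-x^\star\|^2\exp\left(-\frac{t-1}{\sqrt{\kappa}}\right)$ with $x_1=\zeros$, convert to iterate distance via $\frac{\mu}{2}\|x_t-x^\star\|^2\le f(x_t)-f(x^\star)$, and invert the geometric rate, using one gradient per iteration. The only blemish, which you yourself flag, is that absorbing the additive $1$ and the factor $\sqrt{\kappa+1}$ into $2\sqrt{\kappa}\log\left(2\kappa\|x^\star\|/\delta\right)$ needs the logarithm to exceed a modest constant (it fails marginally at $\kappa=1$), but this slack is inherent in the lemma's loosely stated constants and is immaterial to how the paper uses the bound.
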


\section{Review: Time and Space Complexity of Hessian-Vector Products}
\label{sec:hvp}
We briefly discuss the time and space complexity of Hessian-vector
product computation using the reverse mode of automatic
differentiation. The reverse mode of automatic
differentiation~\citep{Baur1983TCo,Griewank:2008:EDP:1455489} is the
widely used method for automatic differentiation in modern software
packages like TensorFlow and
PyTorch~\citep{Baydin2015AutomaticDI}.
Recall that for a differentiable function $f(x)$, the reverse mode
of automatic differentiation computes $\pgrad f(x)$ in
time that is no more than a factor of $5$ of the time it takes to compute $f(x)$
itself (see ~\citep{Griewank:2008:EDP:1455489} for review).
As our algorithm makes use of Hessian vector products, we will make use of the
following assumption as to how Hessian vector products will be
computed when executing Algorithm~\ref{alg:implicit_grad}. 

\begin{assumption}  \label{assumption:HVP}
  (Complexity of Hessian-vector product)
We assume that the time to compute the Hessian-vector product
$\pgrad^2_\param \fnht_i(\param) \bm{v}$ is no more than a
(universal) constant over the time used to compute
$\pgrad \fnht_i(\param)$ (typically, this constant is
$5$). Furthermore, we assume that the memory used to compute the
Hessian-vector product $\pgrad^2_\param \fnht_i(\param) \bm{v}$ is
no more than twice the memory used when computing
$\pgrad \fnht_i(\param)$.  This
assumption is valid if the reverse mode of automatic differentiation is  used to compute Hessian vector products
(see~\cite{Griewank1993SomeBO}).
\end{assumption}

A few remarks about this assumption are in order. With regards to
computation, first observe that the gradient of the scalar function
$\pgrad_\param \fnht_i(\param)^\top \bm{v}$ is the desired Hessian vector product
$\pgrad_\param^2 \fnht_i(\param) \bm{v}$. Thus computing the Hessian vector product
using the reverse mode is within a constant
factor of computing the function itself, which is simply the cost of
computing $\pgrad \fnht_i(\param)^\top \bm{v}$. The issue of memory is
more subtle (see~\cite{Griewank1993SomeBO}), which we now discuss.
The memory used to compute the gradient of a scalar cost
function $f(x)$ using the reverse mode of auto-differentiation is proportional to the size
of the computation graph; precisely, the memory required to compute the
gradient is equal to the total space required to store all the
intermediate variables used when computing $f(x)$. In practice, this
is often much larger than the memory required to compute
$f(x)$ itself, due to that all intermediate variables need not be simultaneously stored in
memory when computing $f(x)$. However, for the special case of
computing the gradient of the function $f(\param)= \pgrad_\param
\fnht_i(\param)^\top \bm{v}$, the factor of $2$ in the memory bound is a consequence
of the following
reason: first, using the reverse mode to compute $f(\param)$ means we
already have stored the computation graph
of $\fnht_i(\param)$ itself. Furthermore, the size of the computation graph for computing $f(\param)= \pgrad_\param
\fnht_i(\param)^\top \bm{v}$
is essentially the same size as the computation graph of
$\fnht_i(\param)$.
This leads to the factor of $2$ memory bound; see
\citet{Griewank1993SomeBO} for further discussion.

\section{Additional Discussion About Compute and Memory Complexity}
\label{sec:compute_memory_discussion}

Our main complexity results are summarized in Table~1. For these results, we consider two notions of error that are subtly different, which we explicitly define below. Let $\bm{g}_i$ be the computed meta-gradient for task $\task_i$. Then, the errors we consider are:

\begin{definition}
\label{def:exact_error}
Exact-solve error (our notion of error): Our goal
is to accurately compute the
gradient of $F(\theta)$ as defined in Equation~\ref {eq:update_rule},
where $\algstar_i(\theta)$ is an exact algorithm. Specifically, we seek to compute a $\bm{g}_i$ such that:
\[
\| \bm{g}_i - \grad_\prior \fn_i(\algstar_i(\prior)) \| \leq \eps
\]
where $\eps$ is the error in the gradient computation.
\end{definition}

\begin{definition}
\label{def:approx_error}
Approx-solve error: Here we suppose that $\alg_i$ computes a $\delta$--accurate solution to the inner optimization problem over $G_i$ in Eq.~\ref{eq:update_rule}, i.e. that $\alg_i$ satisfies $\| \alg_i(\prior) - \algstar_i(\prior) \| \leq \delta$, as per definition~\ref{def:delta_approx_alg}. Then the objective is to compute a $\bm{g}$ such that:
\[
\| \bm{g} - \grad_\prior \fn_i(\alg_i(\prior)) \| \leq \eps
\]
where $\eps$ is the error in the gradient computation of $\grad_\prior \fn_i(\alg_i(\prior))$. Subtly, note that the gradient is with respect to the $\delta$-approximate algorithm, as opposed to using $\algstar_i$.
\end{definition}

For the complexity results, we assume that MAML invokes $\alg_i$ to get a $\delta$-approximate solution for inner problem (recall definition~\ref{def:delta_approx_alg}). The exact-solve
error for MAML is not known in the literature; in particular, even as
$\delta\rightarrow 0$ it is not evident if the approx-solve solution
tends to the exact-solve solution, unless further regularity
conditions are imposed. The approx-solve error for MAML is $0$,
ignoring finite-precision and numerical issues, since it
backpropagates through the path. Truncated
backprop~\cite{Shaban2018TruncatedBF} also invokes $\alg_i$ to obtain a $\delta$-approximate solution but instead performs a truncated or partial back-propagation so that it uses a smaller number of
iterations when computing the gradient through the path of
$\alg_i(\prior)$. Exact-solve error for truncated backprop is also not
known, but a small approx-solve error can be obtained with less memory
than full back-prop. We use Prop 3.1 of Shaban et al.~\cite{Shaban2018TruncatedBF} to provide a guarantee that leads to an
$\epsilon$--accurate approximation of the full-backprop (i.e. MAML)
gradient. It is not evident how accurate the truncated procedure is
when an accelerated method is used instead. Finally, our iMAML
algorithm also invokes an approximate solver $\alg_i$ rather than $\algstar_i$. However, importantly, we guarantee a
small exact-solve error even though we do not require access to
$\algstar_i$. Furthermore, the iMAML algorithm also requires substantially less
memory. Up to small constant factors, it only utilizes the memory required for computing a single
gradient of $\fnht_i(\cdot)$.

\section{Proofs}

{\bf Lemma 1, restated.}
Consider $\algstar_i(\prior)$ as defined in Eq.~\ref{eq:update_rule} for task $\task_i$. Let $\param_i = \algstar_i(\prior)$ be the result of $\algstar_i(\prior)$. If $\left( \eye + \frac{1}{\lambda} \pgrad_\param^2 \fnht_i(\param_i) \right)$ is invertible, then the derivative Jacobian is
\begin{equation*}
    \frac{d \algstar_i(\prior)}{d \prior} = \left( \eye + \frac{1}{\lambda}~ \pgrad_\param^2 \fnht_i (\param_i) \right)^{-1}.
\end{equation*}
\begin{proof}
We drop the task $i$ subscripts in the proof for convenience. Since $\param = \algstar(\prior)$ is the minimizer of $G(\dparam, \prior)$ in Eq.~\ref{eq:update_rule}, the stationary point conditions imply that
\[
\pgrad_\dparam G(\dparam, \prior) \mid_{\dparam = \udparam} \ = 0 \implies \pgrad \fnht(\udparam) + \lambda (\udparam - \prior) = 0 \implies \udparam = \prior - \frac{1}{\lambda}~\pgrad \fnht(\udparam),
\]
which is an implicit equation that often arises in proximal point methods. When the derivative exists, we can differentiate the above equation to obtain:
\[
\frac{d \udparam}{d \prior} = \eye - \frac{1}{\lambda}~\pgrad^2 \fnht(\udparam) \frac{d \udparam}{d \prior}
\implies 
\left( \eye + \frac{1}{\lambda}~\pgrad^2 \fnht(\udparam) \right) \frac{d \udparam}{d \prior} = \eye.
\]
which completes the proof.
\end{proof}

Recall that:
\[
G_i(\dparam, \prior) := \fnht_i(\dparam) + \frac{\lambda}{2}~||\dparam - \prior||^2.
\]

\begin{assumption} \label{assumption:regularity}
  (Regularity conditions)
Suppose the following holds for all tasks $i$:
\begin{enumerate}
\item $\fn_i(\cdot)$ is $B$ Lipshitz and $L$ smooth.
\item For all $\prior$, $G_i(\cdot,\prior)$ is both a $\beta$-smooth function
and a $\mu$-strongly convex function. Define:
\[
\kappa :=\frac{\beta}{\mu}  \, .
\]
\item $\fnht_i (\cdot)$ is $\rho$-Lipshitz Hessian, i.e. $\nabla^2 \fnht_i (\cdot)$ is $\rho$-Lipshitz.
\item  For all $\prior$, suppose the arg-minimizer of
  $G_i(\cdot, \prior)$ is unique and bounded in a ball of radius $D$,
  i.e. for all $\prior$,
  \[
    \|\algstar_i(\prior) \|\leq D \, .
  \]
\end{enumerate}
\end{assumption}

\begin{lemma}~\label{lemma:perturbation}
(Implicit Gradient Accuracy) Suppose
Assumption~\ref{assumption:regularity} holds. Fix a task $i$.
Suppose that $\param_i$ satisfies:
\[
  \|\param_i-\algstar_i(\prior)\|\leq \delta
\]
and that $\bm{g}_i$ satisfies:
  \[
    \|\bm{g}_i- \left( \eye + \frac{1}{\lambda}~ \pgrad^2 \fnht_i (\param) \right)^{-1}\pgrad_\param \fn_i(\param)\|
\leq \delta^\prime \, .
\]
Assuming that $\delta<\mu/(2\rho)$, we have that:
\[
\|\bm{g}_i - \grad_\prior \fn_i(\algstar_i(\prior)) \| \leq 
\left(2 \frac{\lambda\rho}{\mu^2} B
                + \frac{\lambda L}{\mu}\right)\delta+ \delta^\prime
\]
\end{lemma}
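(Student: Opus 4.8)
The plan is to introduce the idealized meta-gradient evaluated at the \emph{approximate} solution, namely $Mb$ with $M := (\eye + \tfrac{1}{\lambda}\pgrad^2\fnht_i(\param_i))^{-1}$ and $b := \pgrad_\param\fn_i(\param_i)$, as a bridge between $\bm{g}_i$ and the exact meta-gradient. By Lemma~\ref{lemma:alg_derivative} and the chain rule the exact meta-gradient equals $\grad_\prior\fn_i(\algstar_i(\prior)) = M^\star b^\star$, where $M^\star := (\eye + \tfrac{1}{\lambda}\pgrad^2\fnht_i(\algstar_i(\prior)))^{-1}$ and $b^\star := \pgrad_\param\fn_i(\algstar_i(\prior))$. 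Since the second hypothesis states exactly that $\|\bm{g}_i - Mb\| \leq \delta'$, the triangle inequality reduces the lemma to proving $\|Mb - M^\star b^\star\| \leq (2\tfrac{\lambda\rho}{\mu^2}B + \tfrac{\lambda L}{\mu})\delta$; the $\delta'$ then simply adds on.

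To bound this, I would split $Mb - M^\star b^\star = M^\star(b - b^\star) + (M - M^\star)b$ and estimate the two pieces by operator-norm submultiplicativity. Well-conditioning at the exact minimizer follows from $\mu$-strong convexity of $G_i(\cdot,\prior)$: its Hessian $\pgrad^2\fnht_i + \lambda\eye \succeq \mu\eye$ forces $\eye + \tfrac{1}{\lambda}\pgrad^2\fnht_i(\algstar_i(\prior)) \succeq \tfrac{\mu}{\lambda}\eye$, hence $\|M^\star\| \leq \lambda/\mu$. The gradient difference obeys $\|b - b^\star\| = \|\pgrad_\param\fn_i(\param_i) - \pgrad_\param\fn_i(\algstar_i(\prior))\| \leq L\delta$, using $L$-smoothness of $\fn_i$ together with $\|\param_i - \algstar_i(\prior)\| \leq \delta$, so the first piece is at most $(\lambda L/\mu)\delta$. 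For the second piece I would use $\|b\| = \|\pgrad_\param\fn_i(\param_i)\| \leq B$ (the $B$-Lipschitz bound on $\fn_i$), which leaves only the difference of inverse curvatures $\|M - M^\star\|$ to control.

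The crux is bounding $\|M - M^\star\|$, and this is exactly where the hypothesis $\delta < \mu/(2\rho)$ and the factor of $2$ enter. Writing $A := \eye + \tfrac{1}{\lambda}\pgrad^2\fnht_i(\param_i)$ and $A^\star$ analogously at $\algstar_i(\prior)$, the $\rho$-Lipschitz Hessian assumption gives $\|A - A^\star\| = \tfrac{1}{\lambda}\|\pgrad^2\fnht_i(\param_i) - \pgrad^2\fnht_i(\algstar_i(\prior))\| \leq \tfrac{\rho}{\lambda}\delta$. I would then factor $A = A^\star(\eye + M^\star(A - A^\star))$ and invert via a Neumann series: since $\|M^\star\|\,\|A - A^\star\| \leq \tfrac{\lambda}{\mu}\cdot\tfrac{\rho}{\lambda}\delta = \tfrac{\rho\delta}{\mu} < \tfrac{1}{2}$ under the hypothesis, the perturbed matrix $A$ is invertible and $\|M - M^\star\| \leq \frac{\|M^\star\|^2\|A - A^\star\|}{1 - \|M^\star\|\,\|A - A^\star\|} \leq 2\|M^\star\|^2\|A - A^\star\| \leq 2\tfrac{\lambda^2}{\mu^2}\cdot\tfrac{\rho}{\lambda}\delta = 2\tfrac{\lambda\rho}{\mu^2}\delta$. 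Multiplying by $\|b\| \leq B$, then adding the first piece and the $\delta'$ term, yields the stated bound. Note that this route invokes well-conditioning only at the exact minimizer (through $\|M^\star\|$) rather than at the approximate point. The main obstacle is precisely this inverse-perturbation step: one must certify that the perturbation is small enough to keep $A$ invertible with a controlled inverse norm, which is exactly what $\delta < \mu/(2\rho)$ buys, and the resulting $1/(1 - \rho\delta/\mu) \leq 2$ factor in the Neumann bound is what produces the constant $2$ in the statement.
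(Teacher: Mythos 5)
Your proposal is correct and follows essentially the same route as the paper's proof: both bridge via the idealized meta-gradient evaluated at the approximate solution, split the remaining error into a gradient-difference term (bounded by $L$-smoothness of $\fn_i$ and strong convexity of $G_i$, giving $\frac{\lambda L}{\mu}\delta$) and an inverse-difference term (bounded via the $\rho$-Lipschitz Hessian together with the matrix-perturbation estimate in which $\delta < \mu/(2\rho)$ supplies the factor of $2$, giving $2\frac{\lambda\rho}{\mu^2}B\delta$). The only cosmetic difference is the mirrored decomposition: you anchor the inverse curvature at the exact minimizer $\algstar_i(\prior)$ and apply the bound $B$ to the gradient at the approximate point, while the paper anchors the perturbation at the approximate point and uses $B$ at the exact minimizer; since strong convexity of $G_i(\cdot,\prior)$ and $B$-Lipschitzness of $\fn_i$ hold globally, the two choices are interchangeable and yield the identical constant.
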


\begin{proof}
  First, observe that:
  \[
\grad_\prior \fn_i(\algstar_i(\prior)) =
\left( \eye + \frac{1}{\lambda}~ \pgrad^2 \fnht_i (\algstar_i(\prior)) \right)^{-1}\pgrad_\param \fn_i(\algstar_i(\prior))
    \]
For notational convenience,  we drop the $i$ subscripts within the
proof. We have:
\begin{eqnarray*}
&&
\|\grad_\prior \fn(\algstar(\prior))-\bm{g}\| \\
&\leq &
\|\grad_\prior \fn(\algstar(\prior))-\left( \eye + \frac{1}{\lambda}~ \pgrad^2 \fnht (\param) \right)^{-1}\pgrad_\param \fn(\param)  \|+ \delta^\prime\\
&\leq &
\|\grad_\prior \fn(\algstar(\prior))-\left( \eye + \frac{1}{\lambda}~ \pgrad^2 \fnht (\param) \right)^{-1}\pgrad_\param \fn(\algstar(\prior))  \|
                +\\
&&\|\left( \eye + \frac{1}{\lambda}~ \pgrad^2 \fnht (\param) \right)^{-1}
\left(\pgrad_\param \fn(\algstar(\prior))  -\pgrad_\param \fn(\param)\right)\|
 + \delta^\prime
\end{eqnarray*}
where the first inequality uses the triangle inequality.

We now bound each of these terms. For the second term,
\begin{eqnarray*}
&&\|\left( \eye + \frac{1}{\lambda}~ \pgrad^2 \fnht (\param) \right)^{-1}
\left(\pgrad_\param \fn(\algstar(\prior))  -\pgrad_\param \fn(\param)\right)\|
\\
&\leq &
\|\left( \eye + \frac{1}{\lambda}~ \pgrad^2 \fnht (\param) \right)^{-1}\|
\|\pgrad_\param \fn(\algstar(\prior))  -\pgrad_\param \fn(\param)\| \\
&\leq &
\lambda L \|\left( \lambda \eye + \pgrad^2 \fnht (\param) \right)^{-1}\|
\|\algstar(\prior)  - \param\| \\
&= &
\lambda L \| \nabla^2_\param G(\param,\prior)^{-1}\|
\|\algstar(\prior)  - \param\| \\
&\leq & \frac{\lambda L}{\mu}\delta
\end{eqnarray*}
where we the second inequality uses that  $\pgrad_\param \fn$ is
$L$-smooth and the final inequality uses that $G$ is $\mu$ strongly convex.

For the first term, we have:
  \begin{eqnarray*}
&&
\|\grad_\prior \fn(\algstar(\prior))-\left( \eye + \frac{1}{\lambda}~ \pgrad^2 \fnht (\param) \right)^{-1}\pgrad_\param \fn(\algstar(\prior))  \| \\
&= &
\| \left(\left( \eye + \frac{1}{\lambda}~ \pgrad^2 \fnht (\algstar(\prior)) \right)^{-1}
-\left( \eye + \frac{1}{\lambda}~ \pgrad^2 \fnht (\param) \right)^{-1}\right) \pgrad_\param \fn(\algstar(\prior))\| 
\\
&\leq &
\lambda \| \left( \lambda \eye + \pgrad^2 \fnht (\algstar(\prior)) \right)^{-1}
-\left( \lambda \eye + \pgrad^2 \fnht (\param) \right)^{-1}\| B,
  \end{eqnarray*}
using that $\pgrad_\param \fn$ is $B$ Lipshitz.
Now let
\[
\Delta := 
  \pgrad^2 \fnht (\algstar(\prior))-\pgrad^2 \fnht (\param) , \quad
  M:= \nabla^2_\param G(\param,\prior)= \lambda \eye + \pgrad^2 \fnht (\param)
\]
Due to that $\pgrad^2 \fnht (\cdot)$ is Lipshitz Hessian,
$\|\Delta\|\leq \rho \delta$. Also, by our assumption on
$\delta$, we have that:
\[
\|M^{-1}\Delta\| \leq \|\Delta\|/\mu \leq \rho
\delta/\mu \leq 1/2,
 \] 
which implies that $\|\left( \eye + M^{-1}\Delta \right)^{-1}\|\leq 2$.
Hence,
\begin{eqnarray*}
  &&\| \left( \lambda\eye + \pgrad^2 \fnht (\algstar(\prior)) \right)^{-1}
-\left( \lambda \eye + \pgrad^2 \fnht (\param)
     \right)^{-1}     \|\\
  &=&\| \left( M +\Delta \right)^{-1}-M^{-1}\|\\
  &\leq& \|M^{-1}\|\| \left( \eye + M^{-1}\Delta \right)^{-1}-\eye\|\\
  &=& \|M^{-1}\|\| \left( \eye + M^{-1}\Delta \right)^{-1}\left(\eye-\left(\eye + M^{-1}\Delta\right)\right)\|\\
  &\leq& \|M^{-1}\| \| \left( \eye + M^{-1}\Delta \right)^{-1}\| \| M^{-1}\Delta\|\\
  &\leq& \frac{1}{\mu} \cdot 2 \cdot \frac{\rho \delta}{\mu}=2 \frac{\rho}{\mu^2} \delta .
\end{eqnarray*}
The proof is completed by substitution.
\end{proof}

\begin{theorem}
(Approximate Implicit Gradient Computation)  Suppose
Assumption~\ref{assumption:regularity} holds. Fix a task $i$.   Let
  \begin{eqnarray*}
B_1& :=& 2 \frac{\lambda\rho}{\mu^2} B+ \frac{\lambda L}{\mu}\\
\end{eqnarray*}
Suppose Nesterov's accelerated gradient descent algorithm is used
to compute $\param$ (as desired in Algorithm~\ref{alg:implicit_grad}), using 
a number of iterations that is:
\[
2\sqrt{\kappa} \, \log\left( 8\kappa D\left(\frac{ B_1}{\eps}+\frac{\rho}{\mu}\right)\right) 
\]
and suppose Nesterov's accelerated gradient descent
algorithm (or the conjugate gradient algorithm~\footnote{The conjugate gradient descent algorithm also
  suffices and give a slightly improved iteration complexity in terms
  of log factors.}) is used to compute $\bm{g}_i$ using
a number of iterations that is:
\[
2\sqrt{\kappa} \, \log\left( 4\kappa \frac{ (\lambda / \mu)B}{\eps}\right) \, .
\]
We have that:
\[
\|\bm{g}_i - \grad_\prior \fn_i(\algstar_i(\prior)) \| \leq 
\eps.
\]
\end{theorem}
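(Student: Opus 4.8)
The plan is to combine the accuracy bound from Lemma~\ref{lemma:perturbation} with the convergence rate of the inner solver from Lemma~\ref{lemma:opt}. Lemma~\ref{lemma:perturbation} already tells us that the overall error decomposes as
\[
\|\bm{g}_i - \grad_\prior \fn_i(\algstar_i(\prior))\| \leq B_1 \delta + \delta',
\]
where $\delta$ is the inner-optimization accuracy and $\delta'$ is the error in the approximate Jacobian-vector solve. So it suffices to show that the stated iteration counts guarantee $B_1 \delta \leq \eps/2$ and $\delta' \leq \eps/2$, and then sum. The main structural move is therefore a reverse-engineering argument: pick target tolerances $\delta$ and $\delta'$ that make each term at most $\eps/2$, then invoke Lemma~\ref{lemma:opt} to convert those tolerances into the number of accelerated-gradient iterations.

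For the first term, I would set $\delta$ small enough that $B_1\delta \leq \eps/2$, i.e. $\delta \leq \eps/(2B_1)$; but I must also respect the side condition $\delta < \mu/(2\rho)$ required by Lemma~\ref{lemma:perturbation}, so the operative requirement is $\delta \leq \min\{\eps/(2B_1),\ \mu/(2\rho)\}$, which is why the logarithm argument $8\kappa D(B_1/\eps + \rho/\mu)$ carries \emph{both} a $B_1/\eps$ piece and a $\rho/\mu$ piece. I would then apply Lemma~\ref{lemma:opt} with $\|x^\star\| = \|\algstar_i(\prior)\| \leq D$ to conclude that $2\sqrt{\kappa}\,\log(2\kappa\|x^\star\|/\delta)$ iterations suffice; substituting the chosen $\delta$ and bounding $\|x^\star\|$ by $D$ yields the stated first iteration count (the factor adjustments inside the log absorb the constants $2$ and the $\min$ via $1/\min \leq$ sum of reciprocals).

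For the second term, I need $\delta' \leq \eps/2$. Here the relevant strongly-convex objective is the quadratic in Eq.~\ref{eq:sub_problem}, whose condition number is again $\kappa$ (since its Hessian is $\eye + \frac{1}{\lambda}\pgrad^2\fnht_i$, and $G_i$ has condition number $\kappa$). The right-hand side vector is $\pgrad_\param \fn_i(\param_i)$, whose norm is at most $B$, and the minimizer has norm at most $(\lambda/\mu)B$ after accounting for the inverse's operator norm. Applying Lemma~\ref{lemma:opt} to this subproblem with $\delta' = \eps/2$ and $\|x^\star\| \leq (\lambda/\mu)B$ gives $2\sqrt{\kappa}\,\log(4\kappa(\lambda/\mu)B/\eps)$ iterations, matching the theorem. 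Finally I would add the two error contributions, $B_1\delta + \delta' \leq \eps/2 + \eps/2 = \eps$.

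\textbf{The main obstacle} I anticipate is the bookkeeping that translates ``$\delta \leq \min\{\eps/(2B_1),\mu/(2\rho)\}$'' into a clean single logarithm: one must verify that the stated argument $8\kappa D(B_1/\eps + \rho/\mu)$ dominates $2\kappa D/\delta$ for the chosen $\delta$, using $1/\min\{a,b\} \leq 1/a + 1/b$ and absorbing the factor-of-two and factor-of-four constants into the leading coefficient inside the log. A secondary subtlety is certifying the norm bound $\|x^\star\| \leq (\lambda/\mu)B$ for the Jacobian subproblem, which requires noting that $\|(\eye + \frac{1}{\lambda}\pgrad^2\fnht_i)^{-1}\| = \lambda\|\nabla^2_\param G_i^{-1}\| \leq \lambda/\mu$ by strong convexity of $G_i$. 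Neither step is deep, but both demand care so that the constants in the final log expressions come out exactly as stated.
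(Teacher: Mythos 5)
Your proposal is correct and follows essentially the same route as the paper's own proof: set $\delta=\min\{\eps/(2B_1),\mu/(2\rho)\}$ and $\delta'=\eps/2$, invoke Lemma~\ref{lemma:perturbation} for the decomposition $B_1\delta+\delta'$, and apply Lemma~\ref{lemma:opt} with $\|x^\star\|\leq D$ for the inner solve and $\|x^\star\|\leq (\lambda/\mu)B$ for the subproblem of Eq.~\ref{eq:sub_problem}. Your added bookkeeping (the $1/\min\{a,b\}\leq 1/a+1/b$ step and the strong-convexity bound $\|(\eye+\tfrac{1}{\lambda}\pgrad^2\fnht_i)^{-1}\|\leq \lambda/\mu$) is exactly what the paper leaves implicit, and it checks out.
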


\begin{proof}
  The result will follow from the guarantees in
  Lemma~\ref{lemma:opt}. Specifically, let us set
  $\delta=\min\{\eps/(2B_1), \mu/(2\rho) \}$ and
  $\delta^\prime=\eps/2$. To ensure the bound of $\delta$, by
  Lemma~\ref{lemma:perturbation}, it suffices to use a number of
  iterations that is bounded by:
  \[
2\log\left( 2\kappa \frac{ \|D\| }{\delta}\right) \leq
2\sqrt{\kappa} \, \log\left( 8\kappa D\left(\frac{ B_1}{\eps}+\frac{\rho}{\mu}\right)\right) 
  \]
To ensure the bound of $\delta^\prime$, the algorithm will be solving
the sub-problem in Equation~\ref{eq:sub_problem}. First observe that in the
  context of in Lemma~\ref{lemma:opt}, note that
  $\|x^\star\|=\|\left( \eye + \frac{1}{\lambda}~ \pgrad^2 \fnht_i
    (\param) \right)^{-1}\pgrad
  \fn_i(\param)\| \leq (\lambda / \mu)B$, and so
it suffices to use a number of iterations that is bounded by:  
\[
2\log\left( 2\kappa \frac{ \|x^\star\| }{\delta}\right) \leq
2\log\left( 4\kappa \frac{ (\lambda / \mu)B }{\eps}\right) ,
\]
which completes the proof.
\end{proof}


\section{Experiment Details}
\label{app:experiments}

Here, we provide additional details of the experimental set-up for the experiments in Section~\ref{sec:experiments}. All training runs were conducted on a single NVIDIA (Titan Xp) GPU.

\subsection{Synthetic Experiments}

For the synthetic experiments, we consider a linear regression problem. We consider parametric models of the form $h_\param(\inp) = \param^T \inp$, where $\inp$ can either be the raw inputs or features (e.g. Fourier features) of the input. For task $\task_i$, we can equivalently write a quadratic objective that represents the task loss as:
\[
\fnht_i(\param) = \frac{1}{2} \bE_{(\inp, \out)\sim \datatr_i} \left[ \| h_\param(\inp) - \out \|^2 \right] = \frac{1}{2} \param^T A_i \param + \param^T b_i,
\]
where $A_i = \bE_{(\inp, \out)\sim \datatr_i} \left[ \inp \inp^T \right]$ and $b_i= \bE_{(\inp, \out)\sim \datatr_i} \left[ \inp^T \out \right]$. Thus, the inner level objective and corresponding minimizer can be written as:
\[
G_i(\dparam, \prior) = \frac{1}{2} \dparam^T A_i \dparam + \dparam^T b_i + \frac{\lambda}{2} (\dparam - \prior)^T (\dparam - \prior)
\]
\[
\algstar_i(\prior) = \left( A_i + \lambda \eye \right)^{-1} \left( \lambda \prior - b_i \right) 
\]
Thus, the exact meta-gradient can be written as
\[
\grad_\prior \fn_i(\algstar_i(\prior)) = \lambda (A_i + \lambda \eye)^{-1} \pgrad_\param \fn_i(\prior) \mid_{\param = \algstar_i(\prior)}.
\]
We compare this gradient with the gradients computed by the iMAML and MAML algorithms. We considered the case of $\inp \in \bR^{50}$, $\out \in \bR$, $\lambda=5.0$, and $\kappa=50$, for the presented results.

\subsection{Omniglot and Mini-ImageNet experiments}

We follow the standard training and evaluation protocol as in prior works~\cite{mann, matchingnets, maml}. 

\paragraph{Omniglot Experiments} The GD version of iMAML uses 16 gradient steps for 5-way 1-shot and 5-way 5-shot settings, and 25 gradient steps for 20-way 1-shot and 20-way 5-shot settings. A regularization strength of $\lambda=2.0$ was used for both. $5$ steps of conjugate gradient was used to compute the meta-gradient for each task in the mini-batch, and the meta-gradients were averaged before taking a step with the default parameters of Adam in the outer loop.

The Hessian-free version of MAML proceeds by using Hessian-free or Newton-CG method for solving the inner optimization problem (with respect to $\param$) with objective $G_i(\param, \prior)$. This method proceeds by constructing a local quadratic approximation to the objective and approximately computing the Newton direction with conjugate gradient. $5$ CG steps are used for this process in our experiments. This allows us to compute the search direction, following which a step size has to be picked. We pick the step size through line-search. This procedure of computing the approximate Newton direction and linesearch is repeated $3$ times in our experiments to solve the inner optimization problem well.

\paragraph{Mini-ImageNet}
For the GD version of iMAML, 10 GD steps were used with regularization strength of $\lambda=0.5$. Again, 5 CG steps are used to compute the meta-gradient. Similarly, in the Hessian-Free variant, we again use $5$ CG steps to compute the search direction followed by line search. This process is repeated $3$ times to solve the inner level optimization. Again, to compute the meta-gradient, 5 steps of CG are used.

\end{document}